\documentclass{article}
\PassOptionsToPackage{numbers, compress}{natbib}
\usepackage{iclr2026_conference,times}

\iclrfinalcopy
\usepackage{pgfplots}
\pgfplotsset{compat=1.18}  
\usepackage{tikz}
\usepackage{multicol}
\usepackage{multirow}
\usepackage[many]{tcolorbox}
\usepackage{wrapfig}
\usepackage{amsmath}
\usepackage{microtype}
\usepackage{graphicx}
\usepackage{etoc}
\usepackage{booktabs} 

\usepackage{amssymb}
\usepackage{mathtools}
\usepackage{colortbl}
\usepackage{amsthm}
\usepackage{xspace} 
\usepackage{mdframed}
\usepackage{natbib} 
\setcitestyle{numbers} 
\usepackage[utf8]{inputenc} 

\usepackage[T1]{fontenc}    
\usepackage{hyperref}       
\usepackage{url}            
\usepackage{booktabs}       
\usepackage{amsfonts}       
\usepackage{nicefrac}       
\usepackage{microtype}      
\usepackage{xcolor}         
\usepackage{graphicx}
\usepackage{subcaption}
\usepackage{enumitem}
\usepackage[textsize=tiny]{todonotes}


   


\newcommand\numberthis{\addtocounter{equation}{1}\tag{\theequation}}

\mdfdefinestyle{exampleframe}{%
    linecolor=gray!30,
    linewidth=0.5pt,
    backgroundcolor=gray!5,
    roundcorner=2pt,
    innertopmargin=8pt,
    innerbottommargin=8pt,
    innerleftmargin=8pt,
    innerrightmargin=8pt,
    skipabove=\baselineskip,
    skipbelow=\baselineskip
}

\definecolor{darkpurple}{RGB}{48, 25, 52}
\definecolor{darkbrick}{RGB}{128, 0, 0}  
\definecolor{darkforest}{RGB}{0, 59, 0}
\definecolor{darknavy}{RGB}{0, 0, 89}
\hypersetup{
   colorlinks,
   citecolor=[rgb]{0.00, 0.35, 0.90},    
   linkcolor=[rgb]{0.01, 0.62, 0.45},    
   urlcolor=[rgb]{0.95, 0.35, 0.85},     
}

\usepackage[capitalize,noabbrev]{cleveref}

\newtheorem{theorem}{Theorem}
\newtheorem*{theorem*}{Theorem}
\newtheorem*{lemma*}{Lemma}

\newtheorem{definition*}{Definition}
\newtheorem{definition}{Definition}

\title{ABBA-Adapters: Efficient and Expressive Fine-Tuning of Foundation Models}

%

\author{
 \textbf{Raghav Singhal*\textsuperscript{1}},
 \textbf{Kaustubh Ponkshe*\textsuperscript{1}},
 \textbf{Rohit Vartak*\textsuperscript{2}},
 \textbf{Praneeth Vepakomma\textsuperscript{1,3}}
\\
\textsuperscript{1}Mohamed bin Zayed University of Artificial Intelligence,
\textsuperscript{2}Duke University,\\
\textsuperscript{3}Massachusetts Institute of Technology
\\
}

\footnotetext{* denotes equal contribution. Author order decided randomly.}


\begin{document}

\maketitle

\begin{abstract}
Large Language Models have demonstrated strong performance across a wide range of tasks, but adapting them efficiently to new domains remains a key challenge. Parameter-Efficient Fine-Tuning (PEFT) methods address this by introducing lightweight, trainable modules while keeping most pre-trained weights fixed. The prevailing approach, LoRA, models updates using a low-rank decomposition, but its expressivity is inherently constrained by the rank. Recent methods like HiRA aim to increase expressivity by incorporating a Hadamard product with the frozen weights, but still rely on the structure of the pre-trained model.
We introduce \textbf{ABBA}, a new PEFT architecture that reparameterizes the update as a Hadamard product of two independently learnable low-rank matrices. In contrast to prior work, ABBA fully decouples the update from the pre-trained weights, enabling both components to be optimized freely. 
This leads to significantly higher expressivity under the same parameter budget, a property we validate through matrix reconstruction experiments. 
Empirically, ABBA achieves state-of-the-art results on arithmetic and commonsense reasoning benchmarks, consistently outperforming existing PEFT methods by a significant margin across multiple models.
Our code is publicly available at: \url{https://github.com/CERT-Lab/abba}.
\end{abstract}

\section{Introduction}

Large Language Models (LLMs) have become the backbone of modern NLP systems~\cite{Radford2021LearningTV, Kirillov2023SegmentA, achiam2023gpt, touvron2023llama-2, team2023gemini, yang2024qwen2, team2025gemma}, demonstrating strong generalization across a wide range of tasks~\cite{bubeck2023sparks, hao2022languagemodelsgeneralpurposeinterfaces}. However, adapting these models to new tasks typically requires full fine-tuning (FT), which is computationally and memory intensive. Parameter-Efficient Fine-Tuning (PEFT) methods address this challenge by introducing a small number of trainable parameters while keeping the majority of model weights frozen~\cite{lora, Liu_Wang_Yin_Molchanov_Wang_Cheng_Chen_2024, LoRA-XS}. 
Among PEFT approaches, Low-Rank Adaptation (LoRA)~\cite{lora} is the most widely adopted due to its simplicity and effectiveness. It models the weight update \( \Delta W \) as the product of two low-rank matrices, providing a compact parameterization. However, this formulation inherently constrains updates to a low-dimensional subspace, limiting expressivity.
Several extensions attempt to overcome this limitation: LoRA-XS~\cite{LoRA-XS} restricts updates to a frozen, SVD-derived subspace and training only a small higher-rank intermediary matrix while DoRA~\cite{Liu_Wang_Yin_Molchanov_Wang_Cheng_Chen_2024} modifies only the directional component of the weights via low-rank updates. Despite these architectural variations, all retain LoRA’s core constraint: updates remain strictly low-rank and limited in expressivity.

HiRA~\cite{huang2025hira} addresses LoRA's limited expressivity by applying a Hadamard product between a low-rank update and the frozen pre-trained weights \( W_0 \), enabling updates that can, in principle, attain full rank.
However, HiRA's expressivity remains tightly coupled to \( W_0 \), as the learned adapters merely modulate the pre-trained weights rather than generating the full update independently. 
For example, if the target update equals \( \mathrm{diag}(W_0) \), HiRA must learn adapters that approximate the identity matrix, an orthonormal structure that is challenging for low-rank modules to represent accurately (Figure~\ref{fig:reconstruction}).

\begin{figure}[!t]
    \centering
    \begin{subfigure}{0.57\textwidth}
        \centering
        \includegraphics[width=\textwidth]{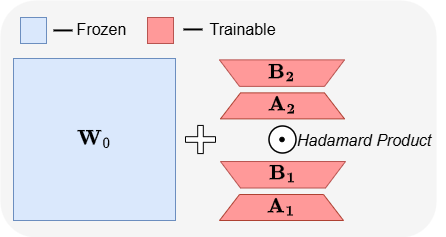}
    \end{subfigure}
    \hfill 
    \begin{subfigure}{0.42\textwidth}
        \centering
        \includegraphics[width=\textwidth]{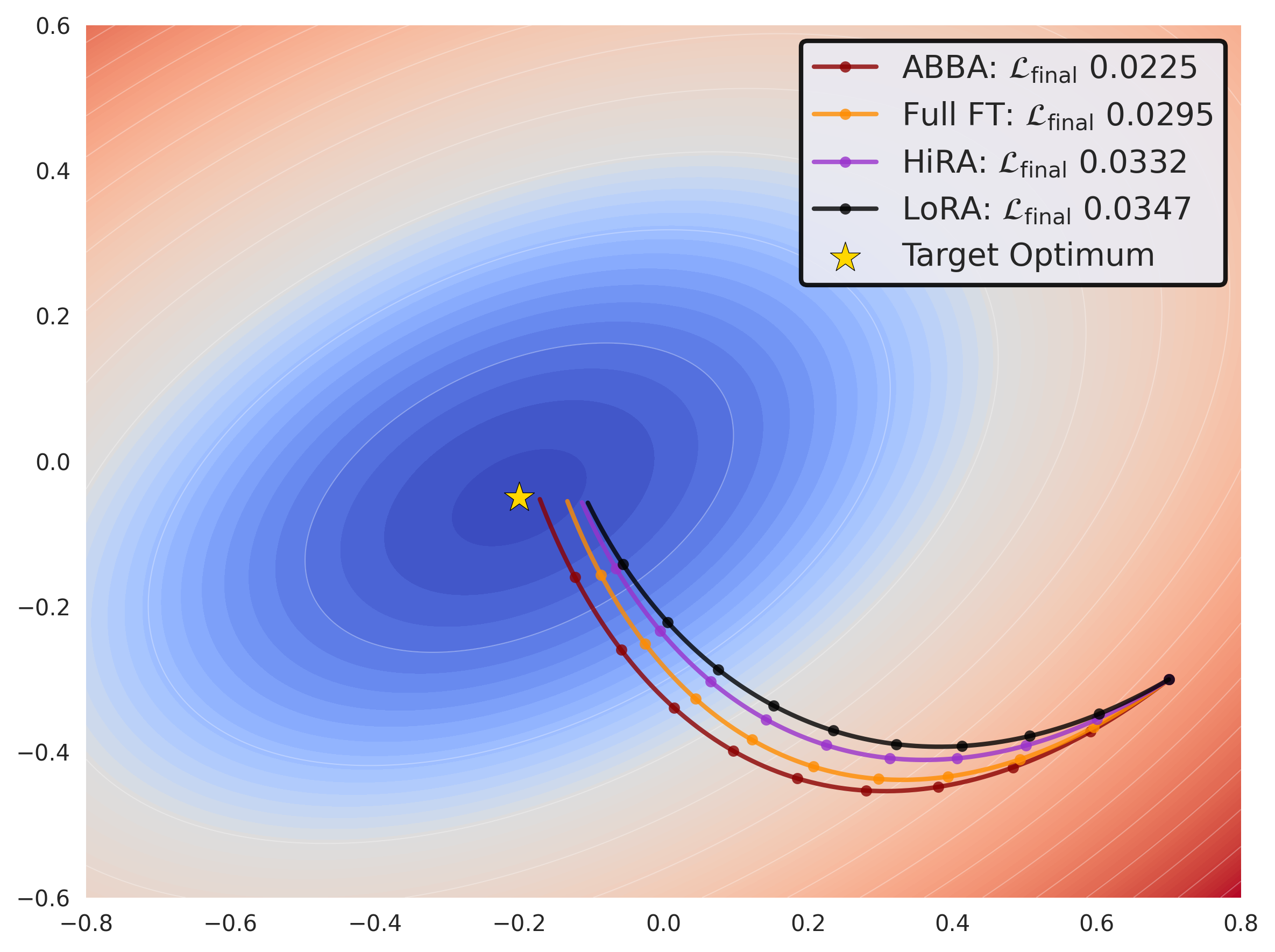}
    \end{subfigure}
    \vspace{-15pt}
    \caption{
    \textbf{Left:} Illustration of ABBA's parameterization, where the update is expressed as the Hadamard product of two learnable low-rank matrices. 
    \textbf{Right:} A toy experiment demonstrating ABBA’s optimization behavior. 
    We first train a 2-layer MLP to classify the first 8 MNIST digits, then fine-tune it to recognize the last 2. 
    ABBA converges faster and achieves better final performance.
    }
    \label{fig:comparison}
    \vspace{-10pt}
\end{figure}

In this work, we introduce \textbf{ABBA}, a novel architecture framework that reparameterizes the weight update as the Hadamard product of two fully learnable low-rank matrices (see Figure \ref{fig:comparison}). 
Each component is independently formed via a low-rank decomposition (\( B_1 A_1 \) and \( B_2 A_2 \)), resulting in a highly expressive update while maintaining parameter counts. 
Unlike HiRA, ABBA is fully decoupled from the pretrained weights \( W_0 \), allowing both components to be optimized without structural constraints. 
The name ABBA reflects the four low-rank matrices that define the architecture.

We analyze expressivity through a matrix reconstruction task, where Hadamard-structured updates consistently outperform standard LoRA decompositions under the same parameter budget (Figure \ref{fig:reconstruction}). This demonstrates that ABBA can represent a broader class of updates than LoRA within identical constraints. 
A toy MNIST experiment (Figure~\ref{fig:comparison}) shows that ABBA converges to a solution significantly closer to the true optimum compared to LoRA and HiRA, indicating that its increased expressivity is also practically accessible during learning.
We present an exact reformulation of the ABBA update using the Khatri–Rao matrix factorization, enabling efficient implementation without approximation. 
Empirically, ABBA consistently outperforms existing PEFT methods across a broad range of tasks and models, all within the same or lower parameter budget. 
In addition, we conduct extensive ablation studies to validate the effectiveness of our design choices and hyperparameter settings.
Our key contributions are summarized as:
\begin{itemize}[leftmargin=*, itemsep=0pt]
    \item We propose ABBA, a novel PEFT architecture that models the weight update as the Hadamard product of two independently learnable low-rank matrices. This formulation enables highly expressive, high-rank updates while preserving strict parameter efficiency.

    \item We provide empirical analyses of ABBA's expressivity, showing that Hadamard-based decomposition consistently outperforms standard low-rank methods in matrix reconstruction.

    \item We introduce an exact and efficient reformulation of ABBA using Khatri–Rao factorization, enabling scalable and practical implementation without compromising expressivity.

    \item Through extensive experiments on four models across arithmetic and commonsense reasoning tasks, we demonstrate that ABBA achieves state-of-the-art performance, significantly outperforming existing PEFT methods under equal or lower parameter budgets.
\end{itemize}


\section{Methodology}

\subsection{Preliminaries}

\textbf{Full Fine-Tuning.} Given a pre-trained weight matrix \( W_0 \in \mathbb{R}^{m \times n} \), full FT updates all parameters via \( W = W_0 + \Delta W \), introducing \( m \times n \) trainable parameters per layer. This quickly becomes impractical due to the high memory and compute overhead.

\textbf{LoRA~\cite{lora}.} LoRA mitigates this by modeling the update as a low-rank decomposition: \( \Delta W = sBA \), where \( B \in \mathbb{R}^{m \times r} \), \( A \in \mathbb{R}^{r \times n} \), and \( s \) is a scaling factor. This reduces the number of trainable parameters to \( r(m + n) \), with \( r \ll \min(m, n) \). LoRA can represent any update of rank at most \( r \), but cannot express higher-rank updates. Moreover, the projected gradient onto the weight space is also low-rank. While effective for simpler tasks, this limitation becomes significant in settings requiring high-rank updates or gradients~\cite{LoRA-Pro, ponkshe2025initializationusingupdateapproximation}.

\textbf{HiRA (Hadamard High-Rank Adaptation) ~\cite{huang2025hira}.} HiRA lifts LoRA's rank limitation by modulating its low-rank update with an element-wise (Hadamard) product with the frozen pre-trained weight \( W_0 \):
\begin{gather}
\Delta W = W_0 \odot (BA), \quad \text{where} \odot \text{denotes the Hadamard product}. 
\end{gather}
This leverages the property that the Hadamard product of two matrices \( W_1 \) and \( W_2 \) with ranks \( r_1 \) and \( r_2 \) respectively satisfies \( \operatorname{rank}(W_1 \odot W_2) \leq r_1 \cdot r_2 \). Thus, HiRA can produce updates of rank up to \( r_0 r \), where \( r_0 = \operatorname{rank}(W_0) \), potentially addressing the low-rank limitation of LoRA. Additionally, the gradient projected onto \( W \) is no longer low-rank.
However, higher rank does not necessarily imply greater expressivity. Because HiRA’s update is element-wise tied to \( W_0 \), it is restricted to a subspace defined by the pre-trained weights. This dependence can hinder generalization, especially in out-of-domain scenarios. 
As shown in Section \ref{subsec:lora_hira}, HiRA reduces reconstruction error over LoRA only when the element-wise ratio of the oracle update to \( W_0 \) is itself low-rank.

\subsection{Improving the Expressivity of HiRA}

A natural way to overcome HiRA’s expressivity constraint is to make the matrix \( W_h \) learnable:
\begin{gather}
\Delta W = W_h \odot (BA), \qquad \text{where } W_h \in \mathbb{R}^{m \times n}\ \text{is trainable}.
\end{gather}
However, this reintroduces the full \( m \times n \) parameter cost of \( W_h \), negating LoRA’s core efficiency advantage. 
Even if \( W_h \) is fixed but not equal to \( W_0 \), the additional memory required to store it significantly increases the overhead. 
In contrast, HiRA sets \( W_h = W_0 \), which is already stored, thereby preserving LoRA’s parameter and memory efficiency.
This raises a critical question:
\begin{center}
\textit{\textbf{Can we achieve greater expressivity and high-rank learning while maintaining the parameter and memory efficiency of LoRA?}}
\end{center}

Importantly, we note that full-rank updates are not always necessary. 
Moreover, \( W_h \) itself does not need to be full-rank. Since any \( m \times n \) matrix has rank at most \( r_0 = \min(m, n) \), a modulation matrix with rank above \( r_0 / r \) offers no additional expressivity for the Hadamard product.

\subsection{ABBA-Adapters: Improved Expressivity and High Rank, Yet Efficient}

As discussed above, high-rank updates do not require \( W_h \) to be full-rank. Leveraging this insight, we reparameterize \( W_h \) as the Hadamard product of two independently learnable low-rank matrices, resulting in the following formulation:
\begin{equation}
\Delta W = s(B_1 A_1) \odot (B_2 A_2),
\end{equation}
where \( B_1 \in \mathbb{R}^{m \times r_1},\; A_1 \in \mathbb{R}^{r_1 \times n} \) and  
\( B_2 \in \mathbb{R}^{m \times r_2},\; A_2 \in \mathbb{R}^{r_2 \times n} \), with \( r_1, r_2 \ll \min(m, n) \) and $s$ is a scaling factor for stability.
This parameterization introduces only \( (r_1 + r_2)(m + n) \) parameters, significantly fewer than full FT, and achieves an effective rank up to \( r_1 r_2 \).
To maximize expressivity under a fixed parameter budget, we set \( r_1 = r_2 \), as further supported by empirical results in Section~\ref{subsection-r1r2}. 
This preserves HiRA’s ability to produce high-rank updates while improving expressivity, since all four matrices are independently learned.
For fair comparison with LoRA and other PEFT baselines, we match parameter counts by setting \( r_1 = r_2 = r/2 \), so that ABBA and other methods use equivalent parameter budgets.

\paragraph{Initialization of ABBA Adapters.} \label{initialzation}

HiRA fixes the modulation matrix as \( W_h = W_0 \), directly tying the update to the pretrained weights. In contrast, \textbf{ABBA} makes this matrix fully learnable by reparameterizing it as \( W_h = B_1 A_1 \). We initialize the first adapter pair \( (B_1, A_1) \) using the top-\( r_1 \) components from a truncated SVD of \( W_0 \), and the second pair \( (B_2, A_2) \) using the standard LoRA initialization: \( B_2 \) as zeros and \( A_2 \) with Kaiming uniform sampling.
\begin{gather}
    U_{r_1}, \Sigma_{r_1}, V_{r_1}^\top \leftarrow \textbf{SVD}_{r_1}(W_0) \label{eq:svd}, \\
    B_1 \leftarrow U_{r_1} \Sigma_{r_1}^{1/2}, \quad 
    A_1 \leftarrow \Sigma_{r_1}^{1/2} V_{r_1}^\top, \quad 
    B_2 \leftarrow \mathbf{0}, \quad 
    A_2 \leftarrow \mathcal{N}(0, \sigma^2). \label{eq:abba-init}
\end{gather}
By the Eckart–Young–Mirsky (EYM) theorem~\citep{eckart1936approximation, mirsky1960symmetric}, the truncated SVD yields the optimal rank-\( r_1 \) approximation of \( W_0 \). This hybrid initialization anchors the update close to a meaningful low-rank subspace, while enabling the second adapter pair to explore task-specific directions during training. We validate the effectiveness of this strategy empirically in Section \ref{subsec:init}.

\paragraph{Making ABBA Memory-Efficient.} 
While ABBA is clearly parameter-efficient, analyzing its memory footprint during training is more subtle. In LoRA, the update \( \Delta W = BA \) is applied as \( \Delta W x = B (A x) \), allowing intermediate computations to remain low-rank. 
Only the activation \( A x \in \mathbb{R}^r \) and the adapter weights need to be stored additionally, avoiding the materialization of the full \( m \times n \) matrix \( BA \).
In contrast, ABBA’s update \( \Delta W = (B_1 A_1) \odot (B_2 A_2) \) poses a challenge. 
A naive implementation would require constructing both \( B_1 A_1 \) and \( B_2 A_2 \), followed by their elementwise product, resulting in the storage of multiple full \( m \times n \) matrices. 
Moreover, unlike LoRA, the Hadamard product does not distribute over matrix–vector multiplication, so computing \( B_2 (A_2 x) \) does not help incorporate the other matrices.

\begin{tcolorbox}[colback=cyan!10, colframe=black]
\begin{theorem}[Khatri–Rao Factorization~\cite{slyusar1997new}]\label{lemma:hadamard-conversion}
Let \( B_1 A_1, B_2 A_2 \in \mathbb{R}^{m \times n} \)  . Then,
$
(B_1 A_1) \odot (B_2 A_2)
= \underbrace{(B_1 \odot_r B_2)}_{m \times r_1 r_2}
\underbrace{(A_1^\top \odot_r A_2^\top)^\top}_{r_1 r_2 \times n},
$
where \( \odot_r \)\footnotemark denotes the row-wise Khatri–Rao product.
\end{theorem}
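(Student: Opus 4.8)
The plan is to prove the identity entrywise, since two matrices of the same dimensions are equal if and only if all their entries agree. Writing out the $(i,j)$ entry of the left-hand side reduces the statement to a purely algebraic identity about double sums, and the right-hand side produces exactly the same double sum once the row-wise Khatri--Rao products are expanded. The only real content is bookkeeping: verifying that the two Khatri--Rao factors index their shared dimension of size $r_1 r_2$ in a consistent order.

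First I would compute the $(i,j)$ entry of $(B_1 A_1) \odot (B_2 A_2)$. By the definition of the Hadamard product together with ordinary matrix multiplication, this equals $\big(\sum_{k=1}^{r_1} (B_1)_{ik}(A_1)_{kj}\big)\big(\sum_{l=1}^{r_2}(B_2)_{il}(A_2)_{lj}\big)$. Distributing the product of the two sums gives the double sum $\sum_{k=1}^{r_1}\sum_{l=1}^{r_2} (B_1)_{ik}(B_2)_{il}(A_1)_{kj}(A_2)_{lj}$, which I regard as a single sum over the multi-index $(k,l)$ ranging over $\{1,\dots,r_1\}\times\{1,\dots,r_2\}$ in a fixed (say lexicographic) order.

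Next I would expand the right-hand side. By the definition of the row-wise Khatri--Rao product, the $i$-th row of $B_1 \odot_r B_2$ is the Kronecker product of the $i$-th rows of $B_1$ and $B_2$, so its entry in the column indexed by $(k,l)$ is $(B_1)_{ik}(B_2)_{il}$. Similarly, the $j$-th row of $A_1^\top \odot_r A_2^\top$ is the Kronecker product of the $j$-th rows of $A_1^\top$ and $A_2^\top$, equivalently of the $j$-th columns of $A_1$ and $A_2$, so its entry at position $(k,l)$ is $(A_1)_{kj}(A_2)_{lj}$; transposing places this as the $((k,l),j)$ entry of $(A_1^\top \odot_r A_2^\top)^\top$. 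Multiplying the two factors, the $(i,j)$ entry of the right-hand side is $\sum_{(k,l)} (B_1)_{ik}(B_2)_{il}(A_1)_{kj}(A_2)_{lj}$, identical to the left-hand side.

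The crucial---and only---step requiring care is that both Khatri--Rao factors enumerate the $r_1 r_2$ intermediate indices in the same order; this is precisely what makes the inner product over the shared dimension reproduce the double sum rather than a permuted version of it. Since the same fixed ordering of $(k,l)$ is used for the columns of $B_1 \odot_r B_2$ and the rows of $(A_1^\top \odot_r A_2^\top)^\top$, the matching is automatic. Finally, the dimension count confirms that an $m \times (r_1 r_2)$ matrix times an $(r_1 r_2) \times n$ matrix lands in $\mathbb{R}^{m \times n}$, agreeing with the left-hand side and completing the proof.
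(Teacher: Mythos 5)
Your proof is correct. Note that the paper itself gives no proof of this theorem at all --- it imports the identity directly from the cited reference (Slyusar, 1997) and uses it as a black box --- so there is nothing internal to compare against; your entrywise expansion is the standard argument and supplies exactly what the paper delegates to the citation. The one point worth flagging is that the paper's footnote defines $\odot_r$ only for two matrices of equal width, whereas here $B_1$ and $B_2$ have widths $r_1$ and $r_2$; your proof implicitly (and correctly) uses the natural extension in which the $i$-th row of $U \odot_r V$ is the Kronecker product of the $i$-th rows, indexed lexicographically, and your observation that both Khatri--Rao factors must enumerate the $(k,l)$ pairs in the same order is precisely the right thing to check.
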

\end{tcolorbox}
\footnotetext{
Given \( U, V \in \mathbb{R}^{m \times n} \), the row-wise Khatri–Rao product \( U \odot_r V \in \mathbb{R}^{m \times n^2} \) is defined by
$
[U \odot_r V]_i := \left[ U_{i1} V_i,\, U_{i2} V_i,\, \dots,\, U_{in} V_i \right],
$
where \( V_i \) is the \( i \)-th row of \( V \).
}

To address this, we use Theorem \ref{lemma:hadamard-conversion} to rewrite ABBA in a LoRA-like form: let \( B_{\text{kr}} = B_1 \odot_r B_2 \) and \( A_{\text{kr}} = (A_1^\top \odot_r A_2^\top)^\top \). The update becomes \( \Delta W x = B_{\text{kr}} (A_{\text{kr}} x) \), avoiding any full-rank construction.

This enables ABBA to match LoRA’s compute and memory efficiency, while offering significantly higher expressivity, and remain more efficient than variants like HiRA, as shown in Section \ref{subsec:memory}.

\subsection{Expressivity of ABBA}

The expressivity of a matrix reparameterization can be evaluated by its ability to accurately reconstruct arbitrary target matrices, relative to alternative parameterizations.

\paragraph{LoRA and HiRA.} \label{subsec:lora_hira}

In LoRA, the weight update is modeled as a low-rank decomposition \( \Delta W = BA \). For any matrix \( M \in \mathbb{R}^{m \times n} \), the reconstruction error of this approximation is defined as:
\begin{align}
\mathcal{E}(r) = \| M - BA \|_F,
\end{align}
and is lower-bounded by the classical EYM theorem~\cite{eckart1936approximation, mirsky1960symmetric}, which states that the optimal rank-\( r \) approximation is given by the truncated SVD. Since a LoRA adapter of rank \( r \) can only represent updates with rank at most \( r \), EYM provides a theoretical minimum for \( \mathcal{E}(r) \). LoRA achieves this bound exactly when the learned adapters align with the top singular components of \( M \); otherwise, practical considerations such as suboptimal initialization may lead to a performance gap.

A similar bound can be derived for HiRA when the modulation matrix \( W_0 \) has all nonzero entries. In this case, the optimal Hadamard-structured approximation can be obtained by element-wise dividing \( W \) by \( W_0 \), followed by applying truncated SVD to the resulting matrix. 
The expressivity advantage of HiRA over LoRA arises only if \( \operatorname{rank}(\Delta W / W_0) < \operatorname{rank}(\Delta W) \). Otherwise, for a general update \( \Delta W \), HiRA has the same reconstruction error bound as LoRA, as characterized by the EYM theorem.

\paragraph{ABBA vs. LoRA: Reconstruction.}

\begin{figure}[!t]                
  \centering                      
  \includegraphics[width= 1 \linewidth]{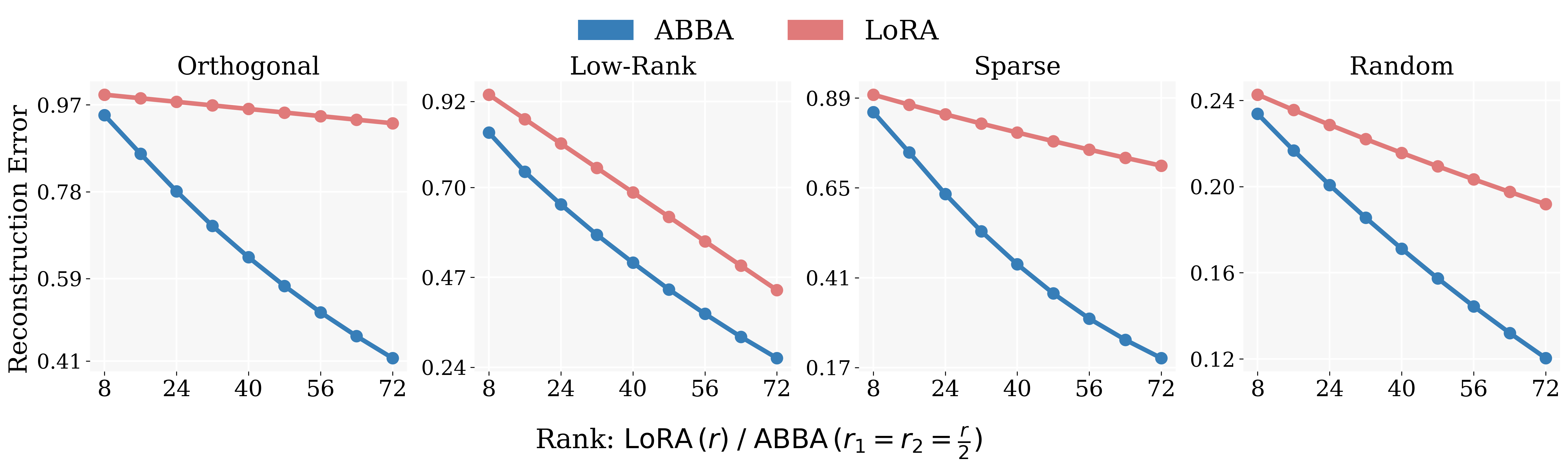}
  \caption{\textbf{Empirical Reconstruction Errors.} We compare ABBA and LoRA decompositions across various matrix types by measuring reconstruction error \( \mathcal{E}(r) \) under equal parameter budgets. For each LoRA rank \( r \), we set ABBA ranks to \( r_1 = r_2 = r/2 \) for a fair comparison. ABBA consistently achieves significantly lower reconstruction error than LoRA, across all matrix types.}
  \vspace{-10 pt}
  \label{fig:reconstruction}          
\end{figure}

Unlike SVD-based methods, ABBA does not admit a closed-form solution for its low-rank factors (see Appendix~\ref{app:why-no-close-form} for understanding why). We thus evaluate its expressivity by comparing the reconstruction error versus other methods. Given a target matrix \( M \in \mathbb{R}^{m \times n} \), we define the reconstruction error for a method \( X \in \{\text{LoRA}, \text{ABBA}\} \) at rank \( r \) as: \\
$
E_{X, r} = \min_{M_{X, r}} \| M - M_{X, r} \|_F^2,
$
where the LoRA approximation is the truncated SVD \( M_{\text{SVD}, r} = U \Sigma_{r} V^\top \), and the ABBA approximation is given by 
$
M_{\text{ABBA}, r} = (B_1 A_1) \odot (B_2 A_2).
$

Prior work~\cite{ciaperoni2024hadamard} establishes a loose upper bound \( E_{\text{ABBA}, r} \le E_{\text{LoRA}, r} \), which holds trivially by setting one ABBA factor to the rank-\( r \) SVD and the other to an all-ones matrix. Empirically, however, they observe a stronger trend:
$
E_{\text{ABBA}, r} \lesssim E_{\text{LoRA}, 2r},
$
which suggests that ABBA can match or outperform a rank-\( 2r \) SVD approximation using only rank \( r \). 
However, this is not guaranteed for arbitrary matrices, as the quality of reconstruction depends on the spectral properties and structure of the matrix.
The only known theoretical comparison between the two is the bound:
$
E_{\text{LoRA}, 2r} - E_{\text{ABBA}, r} \le \sum_{i=2r+1}^{r^2} \sigma_i^2,
$
where \( \sigma_i \) are the singular values of \( M \) \cite{ciaperoni2024hadamard}. 
While this bound offers some insight, it is loose and does not guarantee a strict ordering between the reconstruction errors.

To better understand practical behavior, we empirically evaluate the reconstruction error of different parameterizations across diverse matrix types. 
As shown in Figure~\ref{fig:reconstruction}, the ABBA-based Hadamard reparameterization consistently achieves lower reconstruction error than standard LoRA, indicating greater expressivity. 
This aligns very well with prior work leveraging Hadamard structures for efficient and expressive matrix representations~\cite{nam2022fedpara}, further validating our formulation.

\subsection{Stability of the ABBA Update}

The scaling factor \( s\) is critical in controlling the optimization dynamics of the ABBA update. It must be chosen carefully to scale appropriately with the ranks \( r_1 \) and \( r_2 \) of the underlying low-rank factors. If set too low, learning stagnates; if too high, training may diverge. 
While ABBA resembles LoRA in structure, its effective rank is \( r_1 r_2 \), and the scaling behavior must reflect this increased capacity. Inspired by the scaling analysis in rsLoRA~\cite{rslora}, where stability is achieved under a complexity of \( \mathcal{O}(1) \) with respect to rank, one might expect similar scaling for ABBA. However, since ABBA and LoRA inhabit different parameter spaces, these arguments do not transfer directly. 
To formalize this, we introduce the notion of \textit{rank-stability} for ABBA in Definition~\ref{def:ABBA-stability}, which ensures that forward and backward dynamics remain well-conditioned as \( r_1 \) and \( r_2 \) vary.

\begin{definition}[Rank Stability of ABBA Adapters \cite{rslora, wang2024lora}]\label{def:ABBA-stability}
An ABBA adapter of the form $s_{\text{ABBA}} (B_1A_1) \odot (B_2A_2)$
is \textbf{rank-stabilized} if the following conditions hold:
\begin{enumerate}[leftmargin=*, itemsep=0pt]
\item If the \(2^{\text{nd}}\) moment of the input is $\Theta_{r_1, r_2}(1)$ in each entry with inputs being i.i.d, then the \(2^{\text{nd}}\) moment of the outputs of the adapter is also $\Theta_{r_1, r_2}(1)$ in each entry.
\item If the \(2^{\text{nd}}\) moment of the loss gradient with respect to the adapter outputs is $\Theta_{r_1, r_2}(1)$ in each entry; then the \(2^{\text{nd}}\) moment of the loss gradient of the input of the adapter is also $\Theta_{r_1, r_2}(1)$ in each entry.
\end{enumerate}
\end{definition}

Building on Definition~\ref{def:ABBA-stability}, we establish in Theorem~\ref{theorem:rank-stability} that the ABBA update is indeed rank-stable.
We empirically demonstrate that our scaling yields improved stability, as evidenced by the gradient norm experiments shown in Figure \ref{fig:grad_norms_r1r2} in Appendix \ref{app:rank-stab-grad-norms}.

\begin{tcolorbox}[colback=cyan!10, colframe=black]
\begin{theorem}[Rank-Stability of ABBA] \label{theorem:rank-stability}
Let the ABBA update be
$
\Delta W = s_{\text{ABBA}} \,(B_1A_1)\odot(B_2A_2),
$
with $B_1,A_1,B_2,A_2$ independent, mean-zero random matrices with finite variance. 
Then, the update satisfies the stability conditions of Definition~\ref{def:ABBA-stability} 
\emph{if and only if}:
\[
s_{\text{ABBA}} \in \Theta\!\left(\tfrac{1}{\sqrt{r_1 r_2}}\right).
\]
\end{theorem}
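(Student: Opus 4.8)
The plan is to verify both conditions of Definition~\ref{def:ABBA-stability} by a direct second-moment computation, and to show that the required scaling emerges identically from each, making $s_{\text{ABBA}} \in \Theta(1/\sqrt{r_1 r_2})$ both necessary and sufficient. Throughout I would model the entries of each of $B_1, A_1, B_2, A_2$ as i.i.d., mean-zero, with per-entry variances that are $\Theta_{r_1,r_2}(1)$ (the natural reading of the ``finite variance'' hypothesis in the rsLoRA-style framework, where the rank dependence is carried only by the scaling factor). Writing $C = (B_1A_1)\odot(B_2A_2)$ and $y = s_{\text{ABBA}}\, C x$, the object to control is the per-entry second moment $\mathbb{E}[C_{ij}^2]$.

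First I would establish the central estimate $\mathbb{E}[C_{ij}^2] = \Theta(r_1 r_2)$. Because the pair $(B_1,A_1)$ is independent of $(B_2,A_2)$, the expectation factors as $\mathbb{E}[(B_1A_1)_{ij}^2]\,\mathbb{E}[(B_2A_2)_{ij}^2]$. Within each factor, expanding $(B_1A_1)_{ij} = \sum_k (B_1)_{ik}(A_1)_{kj}$ and using that $B_1,A_1$ are independent with mean-zero i.i.d. entries kills every cross term $k\neq k'$, leaving $\mathbb{E}[(B_1A_1)_{ij}^2] = r_1\,\sigma_{B_1}^2\sigma_{A_1}^2$, and likewise $\mathbb{E}[(B_2A_2)_{ij}^2] = r_2\,\sigma_{B_2}^2\sigma_{A_2}^2$. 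Multiplying gives $\mathbb{E}[C_{ij}^2] = r_1 r_2\,\sigma_{B_1}^2\sigma_{A_1}^2\sigma_{B_2}^2\sigma_{A_2}^2 = \Theta(r_1 r_2)$. The same mean-zero argument shows the off-diagonal correlations $\mathbb{E}[C_{ij}C_{ij'}]$ vanish for $j\neq j'$, so that for i.i.d. inputs with $\mathbb{E}[x_j^2]=\Theta(1)$ we obtain $\mathbb{E}[y_i^2] = s_{\text{ABBA}}^2 \sum_j \mathbb{E}[C_{ij}^2]\,\mathbb{E}[x_j^2] = s_{\text{ABBA}}^2\,\Theta(r_1 r_2)$, with the fixed dimension $n$ absorbed into the constant. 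Condition~1 therefore holds exactly when $s_{\text{ABBA}}^2 r_1 r_2 = \Theta(1)$, i.e. $s_{\text{ABBA}} = \Theta(1/\sqrt{r_1 r_2})$.

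For the backward direction I would repeat the computation on the input gradient. With $g = \partial L/\partial y$ satisfying $\mathbb{E}[g_i^2]=\Theta(1)$ per entry and independent of the adapter matrices, the chain rule gives $(\nabla_x L)_j = s_{\text{ABBA}}\sum_i g_i C_{ij}$; the identical mean-zero and independence bookkeeping collapses the double sum to its diagonal, yielding $\mathbb{E}[(\nabla_x L)_j^2] = s_{\text{ABBA}}^2 \sum_i \mathbb{E}[C_{ij}^2]\,\mathbb{E}[g_i^2] = s_{\text{ABBA}}^2\,\Theta(r_1 r_2)$. Hence Condition~2 imposes the same requirement. Combining the two directions proves the equivalence: any $s_{\text{ABBA}}$ of smaller order drives both moments to $0$ (no lower bound) and any of larger order drives them to $\infty$ (no upper bound), so $\Theta(1/\sqrt{r_1 r_2})$ is simultaneously necessary and sufficient.

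The step I expect to be the main obstacle is resisting the temptation to reduce to the Khatri--Rao form and invoke the rsLoRA $1/\sqrt{r}$ result directly. Theorem~\ref{lemma:hadamard-conversion} writes $\Delta W = B_{\text{kr}} A_{\text{kr}}$ with inner dimension $r_1 r_2$, but the entries of $B_{\text{kr}} = B_1\odot_r B_2$ and $A_{\text{kr}}$ are \emph{products} of the underlying entries and are therefore neither independent nor mean-zero in the way the single-adapter analysis assumes, so that argument does not transfer. The correct route is to work with the Hadamard form throughout and to exploit the factorization of the expectation across the two independent pairs; this is precisely what produces the multiplicative rank $r_1 r_2$ (rather than an additive $r_1 + r_2$) and pins down the exponent in the scaling law.
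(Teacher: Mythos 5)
Your proposal is correct and follows essentially the same route as the paper's proof: a direct per-entry second-moment computation in which independence of the pairs $(B_1,A_1)$ and $(B_2,A_2)$ factors $\mathbb{E}\bigl[(B_1A_1)_{ij}^2(B_2A_2)_{ij}^2\bigr]$ into $r_1\sigma_{B_1}^2\sigma_{A_1}^2 \cdot r_2\sigma_{B_2}^2\sigma_{A_2}^2$, so that both the forward and backward conditions reduce to $s_{\text{ABBA}}^2\, r_1 r_2 = \Theta(1)$. Your additional care about vanishing off-diagonal correlations and the warning against blindly invoking the rsLoRA result through the Khatri--Rao form are sound refinements, but they do not change the argument, which matches the paper's.
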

\begin{proof}
See Appendix~\ref{app:th-rank_stability}.
\end{proof}
\end{tcolorbox}

\paragraph{Final ABBA Parameterization.} 
We express the ABBA scaling factor as
$
s_{\mathrm{ABBA}} \;=\; \frac{\alpha^2}{\sqrt{r_1 r_2}},
$
since Theorem~\ref{theorem:rank-stability} shows that
\( s_{\mathrm{ABBA}} \in \Theta\!\left(\tfrac{1}{\sqrt{r_1 r_2}}\right) \).
With this choice, the ABBA update is:
\begin{equation}
\Delta W \;=\; \frac{\alpha^2}{\sqrt{r_1 r_2}} \,(B_1 A_1) \odot (B_2 A_2).
\end{equation}
Equivalently, to highlight the connection with LoRA, ABBA can be expressed as
the Hadamard product of two rank-stabilized LoRA adapters:
\begin{equation}
\Delta W \;=\; \left(\frac{\alpha}{\sqrt{r_1}} B_1 A_1\right) \odot
               \left(\frac{\alpha}{\sqrt{r_2}} B_2 A_2\right).
\end{equation}

\section{Experiments} \label{sec:experiments}

We evaluate ABBA on a range of models, specifically Llama-3.2 1B \cite{llama3}, Llama-3.2 3B \cite{llama3},  Mistral-7B \cite{mistral7b}, and Gemma-2 9B \cite{gemma2}, to test its effectiveness across diverse scales and architectures. We run experiments on multiple benchmarks to capture varied trends.
Appendix \ref{app:exp} details our training configurations, and Appendix \ref{app:dataset} lists dataset specifics.
For fair comparison with LoRA and other PEFT methods, we match the number of trainable parameters in ABBA by setting \( r_1 = r_2 = r/2 \). 

\textbf{Baselines.}
We compare ABBA against full fine-tuning, LoRA \citep{lora}, and several strong LoRA variants: rsLoRA \citep{rslora}, PiSSA \citep{pissa}, DoRA \citep{Liu_Wang_Yin_Molchanov_Wang_Cheng_Chen_2024}, LoRA-Pro \citep{LoRA-Pro}, and HiRA \cite{huang2025hira}.

\subsection{Commonsense Reasoning}
We fine-tune Llama-3.2 models at 1B and 3B scales \citep{llama3} on \textsc{CommonSense170K}, a multi-task dataset comprising eight commonsense reasoning benchmarks \citep{cr-dataset}. 
These include OBQA \citep{mihaylov2018can}, ARC-Challenge and ARC-Easy \citep{clark2018think}, WinoGrande \citep{sakaguchi2021winogrande}, HellaSwag \citep{zellers2019hellaswag}, PIQA \citep{bisk2020piqa}, SIQA \citep{sap2019socialiqa}, and BoolQ \citep{clark2019boolq}. 
We evaluate performance on each dataset independently to capture task-specific generalization. 
We insert LoRA modules into the key, query, and value projections, the attention output, and all feedforward layers. 
Table \ref{tab:cr} reports the results.
ABBA consistently outperforms all other PEFT methods across both models, and in many cases surpasses full FT.

\begin{table}[!h]
\centering
\caption{
    Comparison of multiple FT methods on Llama-3.2 1B and 3B across eight commonsense reasoning datasets. Best results among PEFT methods are in \textbf{bold}.}
\setlength{\tabcolsep}{1.50pt}
\small
\begin{tabular}{llc|ccccccccc}
  \toprule
  \multirow{2}{*}{\textbf{Model}} & \multirow{2}{*}{\textbf{Method}} & \multirow{2}{*}{\textbf{\# Params}} & \multicolumn{9}{c}{\textbf{Accuracy ($\uparrow$)}} \\
  \cmidrule{4-12}
   &  &  & \textbf{OBQA} & \textbf{ARC-c} & \textbf{ARC-e} & \textbf{Wino} & \textbf{HellaS} & \textbf{PIQA} & \textbf{SIQA} & \textbf{BoolQ} & \textbf{Avg.} \\
  \midrule
  \multirow{10}{*}{Llama-3.2 1B}
  & Full FT     & $1.24$ B   & $73.42$ & $63.70$ & $78.88$ & $75.12$ & $80.98$ & $80.22$ & $74.79$ & $66.21$ & $74.17$ \\
  & LoRA        & $22.54$ M  & $71.83$ & $59.13$ & $74.32$ & $73.87$ & $74.96$ & $78.13$ & $73.75$ & $65.96$ & $71.49$ \\
  & rsLoRA      & $22.54$ M  & $71.11$ & $59.85$ & $74.90$ & $73.85$ & $75.34$ & $78.32$ & $73.47$ & $65.44$ & $71.54$ \\
  & PiSSA       & $22.54$ M  & $71.45$ & $60.32$ & $74.43$ & $72.90$ & $75.65$ & $78.45$ & $73.63$ & $65.83$ & $71.58$ \\
  & DoRA        & $22.92$ M  & $71.99$ & $60.98$ & $77.65$ & $73.42$ & $76.33$ & $78.81$ & $73.79$ & $65.91$ & $72.36$ \\
  & LoRA-Pro    & $22.54$ M  & $71.68$ & $61.11$ & $76.37$ & $73.12$ & $76.89$ & $79.24$ & $74.02$ & $65.79$ & $72.28$ \\
  & HiRA        & $22.54$ M  & $72.18$ & $61.26$ & $78.37$ & $72.06$ & $78.87$ & $79.59$ & $74.41$ & $65.32$ & $72.76$ \\
  \cmidrule{2-12}
  & \cellcolor{cyan!10}ABBA$_{r=16}$ & \cellcolor{cyan!10}$11.27$ M & \cellcolor{cyan!10}$71.86$ & \cellcolor{cyan!10}$63.05$ & \cellcolor{cyan!10}$78.33$ & \cellcolor{cyan!10}$73.95$ & \cellcolor{cyan!10}$80.93$ & \cellcolor{cyan!10}$\mathbf{80.63}$ & \cellcolor{cyan!10}$\mathbf{75.33}$ & \cellcolor{cyan!10}$65.96$ & \cellcolor{cyan!10}$73.76$ \\
  & \cellcolor{cyan!10}ABBA$_{r=32}$ & \cellcolor{cyan!10}$22.54$ M & \cellcolor{cyan!10}$\mathbf{75.06}$ & \cellcolor{cyan!10}$\mathbf{64.59}$ & \cellcolor{cyan!10}$\mathbf{79.74}$ & \cellcolor{cyan!10}$\mathbf{76.03}$ & \cellcolor{cyan!10}$\mathbf{82.50}$ & \cellcolor{cyan!10}$80.41$ & \cellcolor{cyan!10}$75.08$ & \cellcolor{cyan!10}$\mathbf{66.80}$ & \cellcolor{cyan!10}$\mathbf{75.03}$ \\
  \midrule[0.4pt]
  \multirow{10}{*}{Llama-3.2 3B}
  & Full FT     & $3.21$ B   & $81.88$ & $75.29$ & $88.52$ & $85.02$ & $91.92$ & $85.64$ & $80.45$ & $70.43$ & $82.39$ \\
  & LoRA        & $48.63$ M  & $81.87$ & $74.32$ & $86.91$ & $82.24$ & $90.71$ & $85.20$ & $79.12$ & $70.03$ & $81.30$ \\
  & rsLoRA      & $48.63$ M  & $81.72$ & $74.18$ & $86.71$ & $82.02$ & $90.45$ & $85.05$ & $78.92$ & $69.81$ & $81.11$ \\
  & PiSSA       & $48.63$ M  & $81.79$ & $74.61$ & $87.23$ & $82.68$ & $90.88$ & $85.42$ & $79.44$ & $70.12$ & $81.52$ \\
  
  & DoRA        & $49.40$ M  & $82.04$ & $74.87$ & $87.61$ & $82.90$ & $90.76$ & $85.63$ & $79.68$ & $70.43$ & $81.74$ \\
  & LoRA-Pro    & $48.63$ M  & $81.74$ & $75.32$ & $87.24$ & $83.42$ & $90.90$ & $85.81$ & $79.35$ & $71.28$ & $81.88$ \\
  
  & HiRA        & $48.63$ M  & $81.58$ & $76.38$ & $88.76$ & $83.95$ & $91.67$ & $85.61$ & $79.91$ & $72.69$ & $82.56$ \\
  \cmidrule{2-12}
  & \cellcolor{cyan!10}ABBA$_{r=16}$ & \cellcolor{cyan!10}$24.32$ M & \cellcolor{cyan!10}$83.40$ & \cellcolor{cyan!10}$77.39$ & \cellcolor{cyan!10}$89.56$ & \cellcolor{cyan!10}$85.16$ & \cellcolor{cyan!10}$93.51$ & \cellcolor{cyan!10}$\mathbf{86.89}$ & \cellcolor{cyan!10}$80.55$ & \cellcolor{cyan!10}$73.03$ & \cellcolor{cyan!10}$83.68$ \\
  & \cellcolor{cyan!10}ABBA$_{r=32}$ & \cellcolor{cyan!10}$48.63$ M & \cellcolor{cyan!10}$\mathbf{85.04}$ & \cellcolor{cyan!10}$\mathbf{79.10}$ & \cellcolor{cyan!10}$\mathbf{89.61}$ & \cellcolor{cyan!10}$\mathbf{85.24}$ & \cellcolor{cyan!10}$\mathbf{92.37}$ & \cellcolor{cyan!10}$86.83$ & \cellcolor{cyan!10}$\mathbf{80.96}$ & \cellcolor{cyan!10}$\mathbf{73.52}$ & \cellcolor{cyan!10}$\mathbf{84.08}$ \\
  \bottomrule
\end{tabular}
\label{tab:cr}
\end{table}

\subsection{Arithmetic Reasoning}

We fine-tune Mistral-7B \citep{mistral7b} and Gemma-2 9B \citep{gemma2} on a 20K-sample subset of MetaMathQA \citep{metamathqa}, and evaluate their performance on GSM8K \citep{gsm8k} and MATH \citep{math}. 
We insert LoRA adapters into all attention projections (query, key, value, and output) as well as both feedforward layers.
We report results in Table \ref{tab:arithmetic}. 
ABBA achieves superior performance over all other PEFT approaches across both models, and often outperforms full FT. We hypothesize that this effect arises because ABBA’s structured parameterization implicitly regularizes training and enables more efficient task-specific adaptation than unconstrained full FT.

\begin{table}[!h]
 \centering
 \caption{
    Comparison of multiple FT methods on Mistral-7B and Gemma-2 9B across arithmetic reasoning benchmarks. Best results among PEFT methods are in \textbf{bold}.
 }
 \setlength{\tabcolsep}{3pt}
 \small
 \begin{tabular}{l|ccc|ccc}
   \toprule
   \multirow{2}{*}{\textbf{Method}} 
     & \multicolumn{3}{c|}{\textbf{Mistral‑7B}} 
     & \multicolumn{3}{c}{\textbf{Gemma‑2 9B}} \\
   \cmidrule(lr){2-4}\cmidrule(lr){5-7}
     & \textbf{\# Params} & \textbf{GSM8K ($\uparrow$)} & \textbf{MATH $(\uparrow$)} 
     & \textbf{\# Params} & \textbf{GSM8K ($\uparrow$)} & \textbf{MATH $(\uparrow$)} \\
   \midrule
   Full FT      & $7.24$ B   & $63.87$ & $17.65$ 
                & $9.24$ B   & $79.23$ & $38.02$ \\
   LoRA         & $83.88$ M  & $61.94$ & $15.98$ 
                & $108.04$ M & $76.19$ & $36.56$ \\
   rsLoRA       & $83.88$ M  & $62.15$ & $16.24$ 
                & $108.04$ M & $76.84$ & $36.88$ \\
   PiSSA        & $83.88$ M  & $62.43$ & $16.52$ 
                & $108.04$ M & $77.12$ & $37.04$ \\
   DoRA         & $85.26$ M  & $62.65$ & $16.64$ 
                & $109.88$ M & $77.58$ & $37.04$ \\
   LoRA‑Pro     & $83.88$ M  & $63.07$ & $17.32$ 
                & $108.04$ M & $78.26$ & $37.53$ \\
   HiRA         & $83.88$ M  & $63.15$ & $17.44$ 
                & $108.04$ M & $78.47$ & $38.22$ \\
   \midrule
   \cellcolor{cyan!10}ABBA$_{r=16}$ 
                & \cellcolor{cyan!10}$41.94$ M  & \cellcolor{cyan!10}$64.97$ & \cellcolor{cyan!10}$18.06$ 
                & \cellcolor{cyan!10}$54.02$ M  & \cellcolor{cyan!10}$78.70$ & \cellcolor{cyan!10}$38.41$ \\
   \cellcolor{cyan!10}ABBA$_{r=32}$ 
                & \cellcolor{cyan!10}$83.88$ M  & \cellcolor{cyan!10}\textbf{66.26} & \cellcolor{cyan!10}\textbf{18.08} 
                & \cellcolor{cyan!10}$108.04$ M & \cellcolor{cyan!10}\textbf{79.76} & \cellcolor{cyan!10}\textbf{39.18} \\
   \bottomrule
 \end{tabular}
 \label{tab:arithmetic}
\end{table}
\vspace{-10pt}

\section{Analysis} \label{analysis}

\subsection{Initialization Strategies for ABBA} \label{subsec:init}

Initialization of the adapter matrices \( B_1, A_1 \text{ and } B_2, A_2 \) is crucial to ABBA’s performance. 
A naive LoRA-style initialization, where \( B_1, B_2 \) are set to zero and \( A_1, A_2 \) use Kaiming uniform, leads to training failure due to zeroed-out gradients ({see Appendix~\ref{app:gradients_vanish}}).
To address this, we explore several initialization strategies that combine truncated SVD-based approximations with standard schemes, summarized in Table~\ref{tab:init-matters}.

Inspired by PiSSA-LoRA \cite{pissa}, one approach is to approximate the base weight \( W_0 \) at initialization. This can be done by initializing one adapter pair using the truncated SVD of \( W_0 \), and the other with scaled constant values (e.g., ones) to prevent gradient explosion \textbf{(1)}.
Another strategy initializes both adapter pairs using the top-\( r/2 \) components from the truncated SVD of \( \sqrt{W_0} \) \textbf{(2)}. 
A variation of this approach assigns the top-\( r/2 \) components to one adapter pair and the next-\( r/2 \) to the other, introducing greater representational diversity and yielding slightly improved results \textbf{(3, 4)}.
We also consider a hybrid strategy, where one adapter pair approximates \( W_0 \) via truncated SVD, and the other follows LoRA-style initialization (Kaiming for \( A \), zeros for \( B \)). This configuration performs best and closely resembles the initialization used in HiRA \textbf{(5)}.

Our final method adopts this approach: \( B_1, A_1 \) are initialized using the truncated SVD of \( W_0 \), while \( B_2, A_2 \) follow LoRA-style initialization \textbf{(Ours)}.

\begin{table}[h]
 \centering
 \caption{Comparison of different initialization strategies for ABBA (Mistral-7B).}
 \label{tab:init-matters}
 \setlength{\tabcolsep}{1.5pt}
 \small
 \begin{tabular}{l|cc}
   \toprule
   \textbf{Initialization Method} & \textbf{GSM8K} & \textbf{MATH} \\
   \midrule
   \textbf{(1)} \( B_1, A_1 \leftarrow \text{top-} r/2 \text{ from Trunc. SVD}(W_0) \), \( B_2, A_2 \leftarrow \text{Ones (scaled)} \) & 57.39 & 10.88 \\
   \textbf{(2)} Both adapters: top-\( r/2 \) from \(\text{Trunc. SVD}(\sqrt{W_0})\) & 64.53 & 16.57 \\
   \textbf{(3)} First adapter: top-\( r/2 \), second: next-\( r/2 \) from \(\text{Trunc. SVD}(\sqrt{W_0})\) & 64.86 & 17.05 \\
   \textbf{(4)} Second adapter: top-\( r/2 \), first: next-\( r/2 \) from \(\text{Trunc. SVD}(\sqrt{W_0})\) & 64.79 & 17.16 \\
    \textbf{(5)} \( B_2, A_2 \leftarrow \text{top-} r/2 \text{ from Trunc. SVD}(W_0) \), \( B_1, A_1 \leftarrow \) LoRA Init (Zeros, Kaiming) & 66.19 & 18.06 \\
   \rowcolor{cyan!10} \textbf{Ours:} \( B_1, A_1 \leftarrow \text{top-} r/2 \text{ from Trunc. SVD}(W_0) \), \( B_2, A_2 \leftarrow \) LoRA Init (Zeros, Kaiming) & \textbf{66.26} & \textbf{18.08} \\
   \bottomrule
 \end{tabular}
\end{table}
\vspace{-10pt}

\subsection{Choosing Important Hyperparameters} 

\paragraph{Selecting $\alpha$.} \label{subsec:selecting-alpha}
To empirically validate this, we sweep over a range of $\alpha$ values for Llama-3.2 3B, in Table \ref{tab:eta-vary-cr}. 
Consistent with prior PEFT literature, ABBA achieves optimal performance within the typical LoRA scaling range of $16 -32$. 
Additional evidence is provided in Table \ref{tab:eta-vary-math} (Appendix \ref{app:vary-eta}).

\begin{table}[!h]
\centering
\caption{
    Performance comparison across different $\alpha$ values for Llama-3.2 3B.
}
\setlength{\tabcolsep}{2.85pt}
\small
\begin{tabular}{l|ccccccccc}
  \toprule
  \multirow{2}{*}{$\alpha$} & \multicolumn{9}{c}{\textbf{Accuracy ($\uparrow$)}} \\
  \cmidrule(lr){2-10}
   & \textbf{BoolQ} & \textbf{PIQA} & \textbf{SIQA} & \textbf{HellaS.} & \textbf{WinoG.} & \textbf{ARC-e} & \textbf{ARC-c} & \textbf{OBQA} & \textbf{Avg.} \\
  \midrule
  $4$       & $70.92$ & $85.74$ & $79.84$ & $92.07$ & $84.45$ & $88.38$ & $75.83$ & $82.20$ & $82.43$ \\
  $8$  & $71.19$ & $86.83$ & $80.65$ & $92.60$ & $85.95$ & $88.47$ & $76.11$ & $82.20$ & $82.97$ \\
  $12$        & $72.35$ & $86.62$ & $81.63$ & $92.82$ & $85.01$ & $89.10$ & $77.05$ & $83.00$ & $83.45$ \\
  \rowcolor{cyan!10}
  $16$       & $72.88$ & $86.45$ & $80.75$ & $93.18$ & $86.97$ & $89.98$ & $78.33$ & $83.80$ & $84.04$ \\
  \rowcolor{cyan!10}
  $24$ & $73.82$ & $85.91$ & $80.55$ & $93.29$ & $85.87$ & $89.64$ & $78.41$ & $84.60$ & $84.01$ \\
  \rowcolor{cyan!10}
  $32$       & $73.52$ & $86.93$ & $80.96$ & $92.73$ & $85.24$ & $89.61$ & $79.10$ & $85.04$ & $84.08$ \\
  $48$      & $71.83$ & $84.77$ & $78.96$ & $90.52$ & $84.92$ & $87.12$ & $74.57$ & $82.40$ & $81.88$ \\
  $64$      & $67.71$ & $79.43$ & $77.38$ & $81.25$ & $78.69$ & $79.96$ & $66.47$ & $80.00$ & $76.36$ \\
  \bottomrule
\end{tabular}
\label{tab:eta-vary-cr}
\end{table}
\vspace{-10pt}

\paragraph{Selecting \( r_1, r_2 \).} \label{subsection-r1r2}
\begin{wraptable}{r}{0.31\textwidth}
\vspace{-10pt}
\captionsetup{aboveskip=0pt}
\centering
\caption{Different \( \{r_1, r_2\} \) pairs with fixed \( r_1 + r_2 = 32 \).}
\label{tab:r1r2}
\setlength{\tabcolsep}{5pt}
\small
\begin{tabular}{cc|cc}
\toprule
\textbf{\( r_1 \)} & \textbf{\( r_2 \)} & \textbf{GSM8K} & \textbf{MATH} \\
\midrule
4  & 28 & 64.43 & 17.01 \\
8  & 24 & 63.91 & 17.20 \\
12 & 20 & 64.29 & 18.22 \\
\rowcolor{cyan!10}16 & 16 & 66.26 & 18.08 \\
\bottomrule
\end{tabular}
\vspace{-20pt}
\end{wraptable}

An important choice is allocating the total rank budget \( r = r_1 + r_2 \) between the two low-rank projections. 
A balanced setting, \( r_1 = r_2 = r/2 \), is expected to perform best since it maximizes the effective rank \( r_1 r_2 \), increasing expressivity.
We empirically evaluate various \( \{r_1, r_2\} \) combinations under a fixed total rank on Mistral-7B in Table \ref{tab:r1r2}. The symmetric configuration achieves the best accuracy, consistent with our hypothesis.

\subsection{Placement of ABBA in Transformers}
Figure \ref{fig:adaptation} examines the effect of fine-tuning individual transformer components in ABBA, namely Query, Key, Value, Output, Up, Gate, and Down projections. 
The results show the following: Query/Key contribute the least, followed by Value/Up, while Gate/Output/Down are the most impactful. 
This reflects their functional roles: Query/Key support attention scoring, whereas the others play a more direct role in transforming and retaining learned representations.

\begin{figure}[!h]
    \centering
    \begin{subfigure}{0.49\textwidth}
        \centering
        \includegraphics[width=\textwidth]{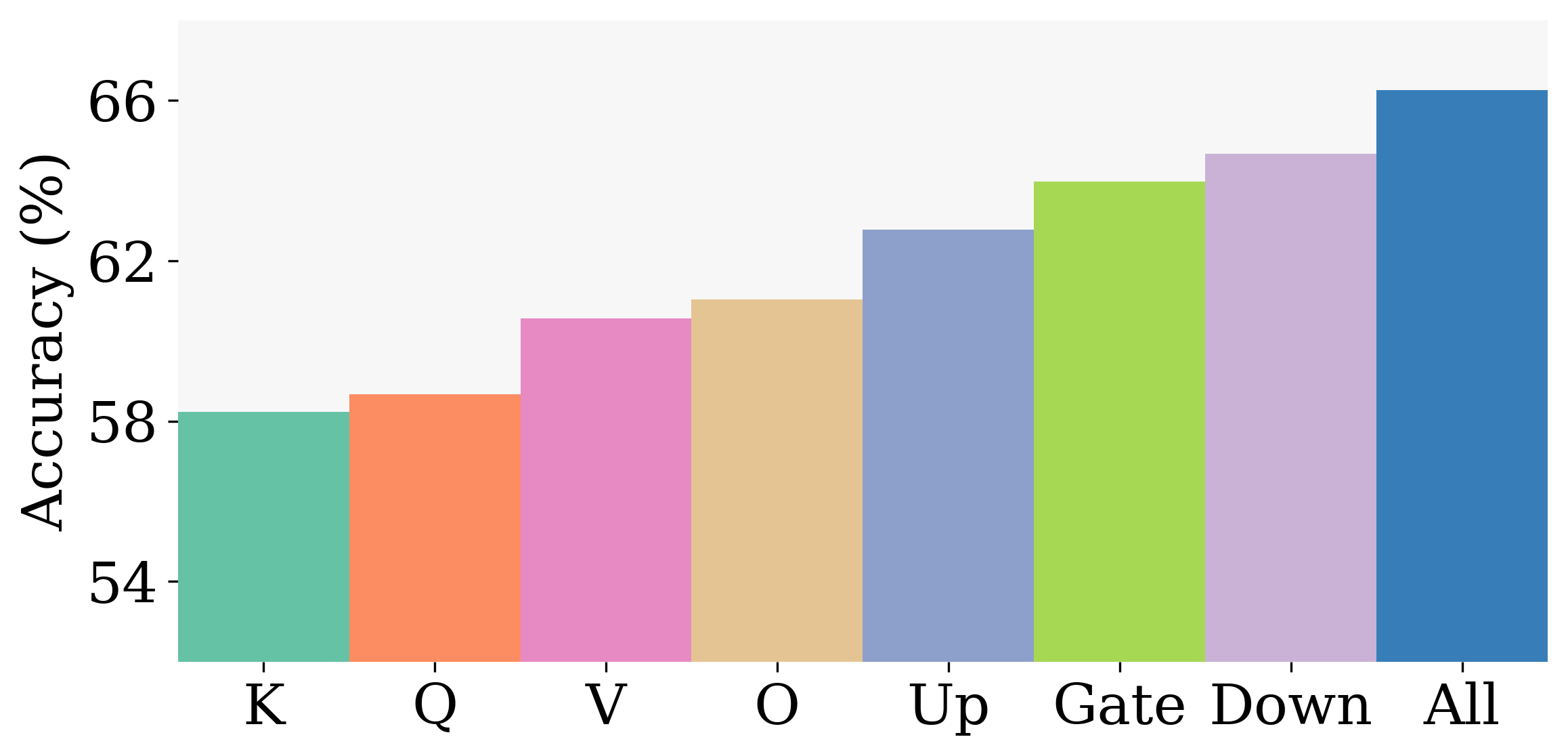}
        \caption{GSM8K}
        \label{fig:adaptation-gsm8k}
    \end{subfigure}
    \hfill 
    \begin{subfigure}{0.49\textwidth}
        \centering
        \includegraphics[width=\textwidth]{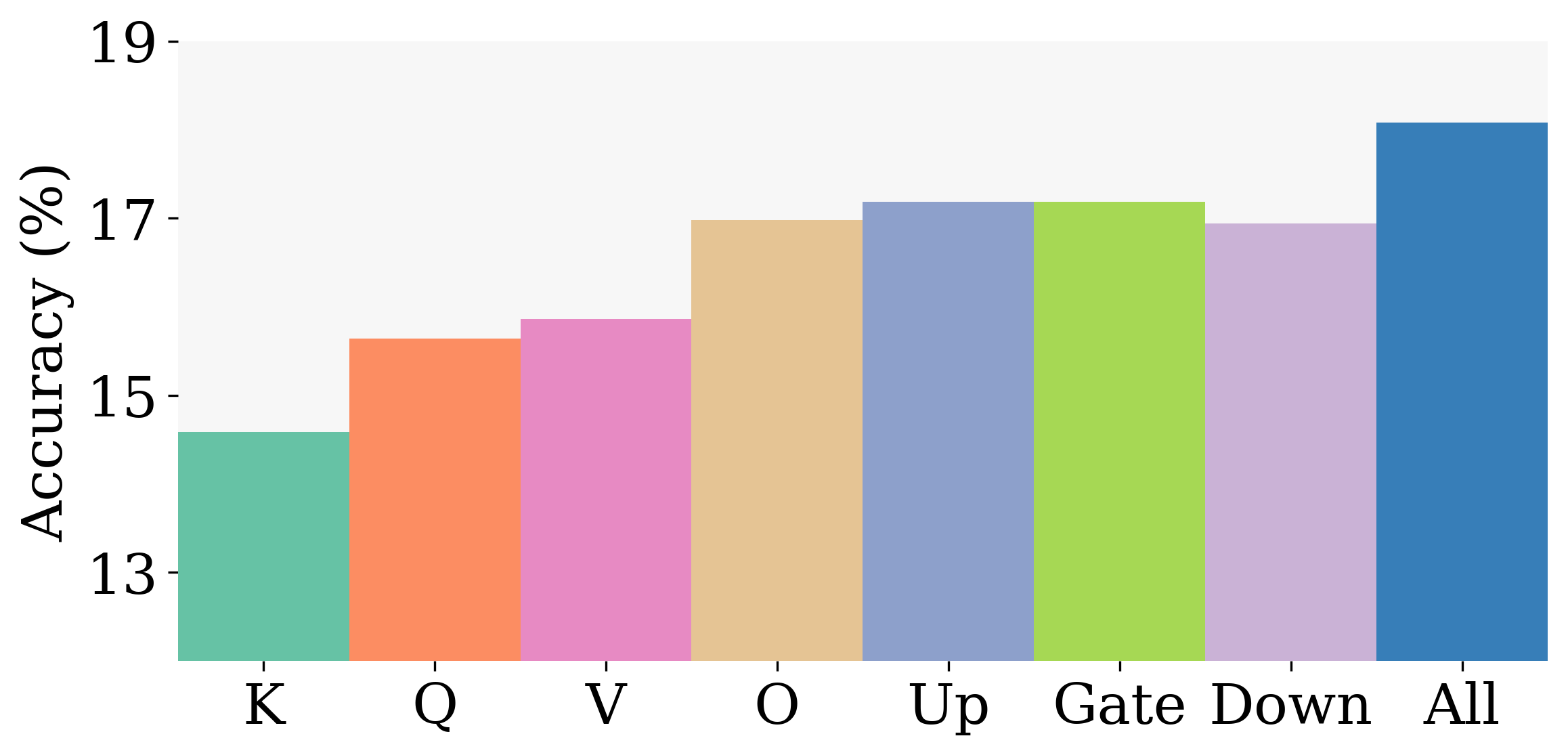}
        \caption{MATH}
        \label{fig:adaptation-math}
    \end{subfigure}
    \caption{Impact of selectively fine-tuning individual transformer components - Key, Query, Value, Output, Up, Gate, and Down projections, with ABBA (Mistral-7B).}
    \label{fig:adaptation}
    \vspace{-10pt}
\end{figure}

\subsection{All About Efficiency}

\paragraph{Training Memory Footprint.} \label{subsec:memory}
We report peak memory usage for various methods in Figure \ref{fig:memory}, measured with a batch size of $1$ and a context length of $256$. ABBA reduces memory consumption by $\approx3-3.5$ times compared to full FT. 
Compared to other PEFT methods, ABBA offers similar memory efficiency to LoRA, and is $\approx 30-35 \%$ more efficient than the next-best method, HiRA.

\paragraph{Training Time.} \label{subsec:time}
We benchmark training time across multiple settings in Table \ref{tab:training-time} (Appendix \ref{app:training-time}). ABBA has comparable training time to LoRA, with only a $\approx 2 - 3\%$ overhead, primarily due to the additional computation introduced by the Hadamard product.
We clarify that the initialization itself is highly efficient, as we compute the \textbf{truncated SVD} using \texttt{torch.svd\_lowrank}. 
This initialization step takes less than one second for the entire model, even for the largest LLMs used in our experiments.

\paragraph{Efficient Inference.}
At deployment time, ABBA supports efficient inference by pre-computing the update and merging it into the base weights as \( W' = W_0 + (B_1A_1) \odot (B_2A_2) \). 
This allows models to switch tasks quickly by subtracting the update to restore \( W_0 \), then applying a new ABBA adapter. 
Since the update is fused ahead of inference, this incurs no runtime overhead or latency.



\begin{figure}
    \centering
    \includegraphics[width=1\linewidth]{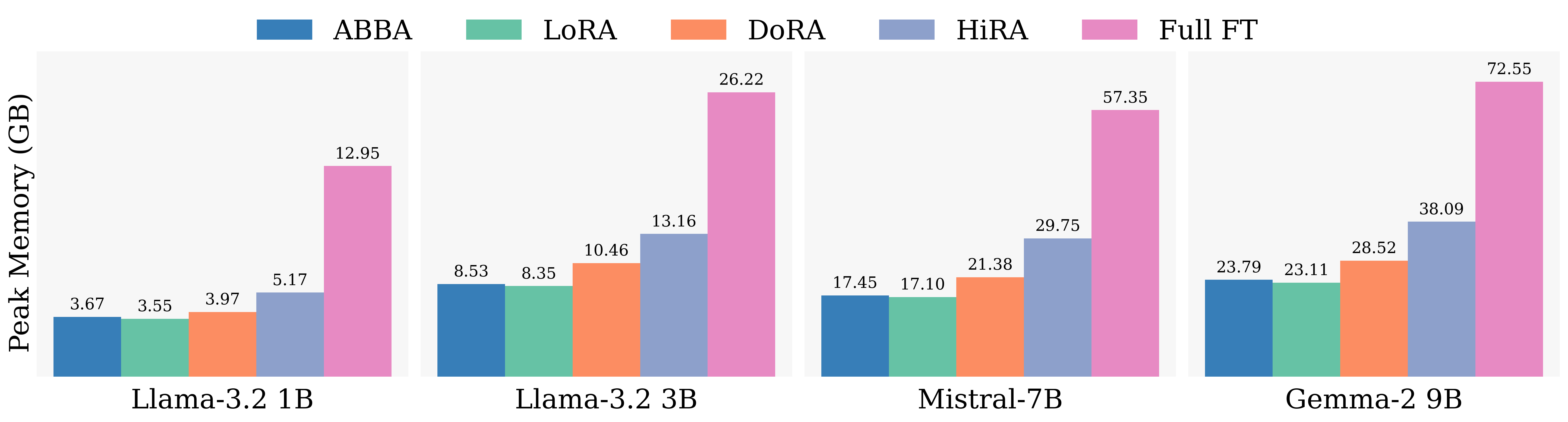}
    \caption{Comparison of training memory requirements across various methods. Results are reported for all models used in our work, with sequence length and batch size fixed at 256 and 1, respectively.}
    \label{fig:memory}
    \vspace{-15pt}
\end{figure}

\subsection{Additional Results on Arithmetic Tasks} \label{app:math-additional}

We report results for fine-tuning Mistral-7B and Gemma-2 9B using a larger 40K subset of MetaMathQA, compared to the 20K subset used in the main experiments.

As shown in Table \ref{tab:arithmetic-40k}, ABBA continues to outperform all other PEFT baselines under matched parameter budgets.
However, we observe that the performance gap between full fine-tuning and ABBA narrows when using the larger subset. 
This suggests that in higher data regimes, full fine-tuning may be a more suitable choice relative to other PEFT methods, provided the additional memory and compute requirements for full fine-tuning can be met.

\begin{table}[!h]
 \centering
 \caption{
    Comparison of multiple FT methods on Mistral-7B and Gemma-2 9B across arithmetic reasoning benchmarks (when finetuned on a 40K subset of MetaMathQA). Best results among PEFT methods are in \textbf{bold}.
 }
 \setlength{\tabcolsep}{3pt}
 \small
 \begin{tabular}{l|ccc|ccc}
   \toprule
   \multirow{2}{*}{\textbf{Method}} 
     & \multicolumn{3}{c|}{\textbf{Mistral-7B}} 
     & \multicolumn{3}{c}{\textbf{Gemma-2 9B}} \\
   \cmidrule(lr){2-4}\cmidrule(lr){5-7}
     & \textbf{\# Params} & \textbf{GSM8K ($\uparrow$)} & \textbf{MATH ($\uparrow$)} 
     & \textbf{\# Params} & \textbf{GSM8K ($\uparrow$)} & \textbf{MATH ($\uparrow$)} \\
   \midrule
   Full FT      & $7.24$ B   & $66.28$ & $18.34$ 
                & $9.24$ B   & $79.89$ & $39.44$ \\
   LoRA         & $83.88$ M  & $62.89$ & $16.45$ 
                & $108.04$ M & $77.02$ & $37.05$ \\
   DoRA         & $85.26$ M  & $63.24$ & $16.67$ 
                & $109.88$ M & $77.46$ & $37.40$ \\
   HiRA         & $83.88$ M  & $64.54$ & $17.88$ 
                & $108.04$ M & $78.90$ & $38.53$ \\
   \midrule
   \cellcolor{cyan!10}ABBA$_{r=16}$ 
                & \cellcolor{cyan!10}$41.94$ M  & \cellcolor{cyan!10}$66.36$ & \cellcolor{cyan!10}$18.51$ 
                & \cellcolor{cyan!10}$54.02$ M  & \cellcolor{cyan!10}$79.93$ & \cellcolor{cyan!10}$39.27$ \\
   \cellcolor{cyan!10}ABBA$_{r=32}$ 
                & \cellcolor{cyan!10}$83.88$ M  & \cellcolor{cyan!10}\textbf{67.04} & \cellcolor{cyan!10}\textbf{18.76} 
                & \cellcolor{cyan!10}$108.04$ M & \cellcolor{cyan!10}\textbf{80.31} & \cellcolor{cyan!10}\textbf{39.92} \\
   \bottomrule
 \end{tabular}
 \label{tab:arithmetic-40k}
\end{table}

\subsection{Extending ABBA via Adapter Chains}
\begin{wraptable}{r}{0.4\textwidth}
\vspace{-10pt}
\centering
\captionsetup{aboveskip=0pt}
\caption{
Standard ABBA (2 pairs) versus a chained extension using 4 pairs.}
\label{tab:abba-chain}
\setlength{\tabcolsep}{1.7pt}
\small
\begin{tabular}{l|cc}
\toprule
\textbf{Configuration} & \textbf{GSM8K} & \textbf{MATH} \\
\midrule
ABBA (2 pairs) & 66.26 & 18.08 \\
Chained ABBA (4 pairs) & 64.84 & 17.74 \\
\bottomrule
\end{tabular}
\vspace{-10pt}
\end{wraptable}

A natural extension of ABBA is decomposing the update into a composition of $k$ multiple adapter pairs:
$
\Delta W = B_1A_1 \odot B_2A_2 \dots \odot B_kA_k,
$
where each adapter pair has rank $r/k$, with $r$ denoting the total rank budget. This factorized form increases the expressive capacity and the effective rank of the update. However, this introduces additional optimization challenges and may reduce training stability.
We evaluate a variant of ABBA using a chain of four adapter pairs and compare it to the standard two-pair setup. As shown in Table~\ref{tab:abba-chain}, the chained version performs slightly worse, likely due to suboptimal scaling or training instability. 
We show gradient norm plots of these variations in Figure \ref{fig:grad_norms_chain} in Appendix \ref{app:chains}.
We leave further investigation of multi-stage compositions to future work.


\section{Conclusion}

ABBA introduces a simple yet effective extension to the PEFT framework by expressing the weight update as a Hadamard product of two independently learnable low-rank matrices. 
This formulation retains the parameter efficiency of LoRA-style methods while enabling significantly higher expressivity. 
Our empirical results demonstrate that this expressivity translates to consistent performance gains across a range of tasks and models. 
Through a mathematically exact Khatri–Rao reformulation, ABBA matches LoRA’s efficiency while representing high-rank updates entirely through low-rank components, offering a more powerful and efficient alternative to existing PEFT approaches.

\section*{Reproducibility Statement}

We have made considerable efforts to ensure that our work is fully reproducible.
Our code is open-source and available at: \url{https://github.com/CERT-Lab/abba}.
Detailed descriptions of our experimental settings can be found in Section~\ref{sec:experiments} and Appendix~\ref{app:exp}.
All datasets in this work are widely used, publicly available benchmarks (see Appendix~\ref{app:dataset} for details).

\section*{Acknowledgements and Disclosure of Funding}

This research was supported by Mohamed bin Zayed University of Artificial Intelligence (MBZUAI), with partial funding from the ADIA Lab Fellowship.

\bibliography{ref}
\bibliographystyle{unsrt}

\clearpage
\appendix

\part*{Appendix}
\addcontentsline{toc}{part}{Appendix} 

\etocsettocdepth{subsection}
\localtableofcontents

\appendix

\section{Related Work}

\paragraph{Parameter-Efficient Fine-Tuning (PEFT) and Low-Rank Adaptation (LoRA).}

PEFT methods adapt large pretrained models to downstream tasks by training a small number of additional parameters while keeping the base model frozen. Among these, LoRA~\cite{lora} is widely adopted for its simplicity and effectiveness, modeling the update \( \Delta W \) as a low-rank product \( B A \), thereby reducing trainable parameters significantly.
Several extensions enhance LoRA along different axes. QLoRA~\cite{qlora} and QA-LoRA \cite{xu2023qa} combines quantization with LoRA to reduce memory footprint; AdaLoRA~\cite{adalora} allocates rank budgets dynamically across layers. LoRA-XS~\cite{LoRA-XS} inserts a small trainable core between frozen adapters to improve compression, while LoRA-Pro~\cite{LoRA-Pro} and LoRA-SB~\cite{ponkshe2025initializationusingupdateapproximation} optimize adapters to better approximate full fine-tuning gradients.
Other variants modify structure or initialization. VeRA~\cite{vera} reuses frozen adapters across layers with task-specific scaling vectors; DoRA~\cite{Liu_Wang_Yin_Molchanov_Wang_Cheng_Chen_2024} applies low-rank updates only to the direction of pretrained weights. PiSSA~\cite{pissa} initializes adapters using top singular vectors, while rsLoRA~\cite{rslora} proposes scale-aware initialization to improve stability.
LoRA-One \cite{lora-one} initializes adapters using this singular subspaces of the full fine-tuning gradient and incorporates preconditioning to improve convergence. 
WeGeFT \cite{savadikarwegeft} learns a small low-rank transformation that maps pretrained weights to their fine-tuned updates, either shared across layers or learned per layer.
Despite all these differences, these methods share a core principle: they represent \( \Delta W \) using a low-rank structure, enabling adaptation under tight compute and memory constraints.
LoRA-based methods have also been applied in other domains, such as federated fine-tuning \cite{wang2024flora, nam2022fedpara, singhal2025fedexloraexactaggregationfederated, singhal2025fedsbsilverbulletextreme}.

\paragraph{Beyond Low-Rank: High-Rank and Structured Adaptation.}
While low-rank PEFT methods offer strong efficiency–performance tradeoffs, they can underperform in tasks requiring high-rank updates. This limitation has driven recent efforts to move beyond purely low-rank parameterizations.
HiRA~\cite{huang2025hira} achieves high-rank updates by taking the Hadamard (elementwise) product of the pretrained matrix \( W_0 \) with a low-rank adapter \( BA \), leveraging the fact that such a product can increase effective rank without increasing parameter count. MoRA~\cite{jiang2024mora} instead learns a full-rank update through input compression and activation decompression, while KronA~\cite{edalati2022krona} uses Kronecker products between adapters to boost representational capacity.
ReLoRA \cite{lialin2023relora} uses multiple low-rank updates to approximate a final higher-rank update.
Other approaches explore elementwise structure for different purposes: FLoRA~\cite{wang2024florafederatedfinetuninglarge} modulates intermediate activations per task using Hadamard products, and PACE~\cite{ni2024pace} injects multiplicative noise during adaptation. 
These trends reflect a broader shift toward structured, high-rank updates for improved expressivity.

Our proposed method, ABBA, follows this direction by learning two independent low-rank matrices whose Hadamard product forms the update, enabling high-rank adaptation with full learnability and minimal overhead.

\section{Why Does No Closed‑Form Reconstruction Solution Exist for ABBA?} \label{app:why-no-close-form}
SVD admits a closed-form solution for low-rank approximation because both the Frobenius and spectral norms are unitarily invariant. This allows the objective:  
\[
\min_{\operatorname{rank}(X)\le k} \|M - X\|_F
\]
to decouple along singular directions, as guaranteed by the Eckart–Young–Mirsky theorem~\cite{eckart1936approximation, mirsky1960symmetric}.

In contrast, ABBA solves the problem:
\[
\min_{B_1, B_2, A_1, A_2} \left\| M - (B_1 A_1) \odot (B_2 A_2) \right\|_F^2, 
\quad \text{subject to } \operatorname{rank}(B_\ell A_\ell) \le r,
\]
which is a non-convex, quartic optimization problem in the latent factors. The Hadamard product breaks orthogonal invariance; unlike the SVD, the two low-rank matrices \( B_1 A_1 \) and \( B_2 A_2 \) cannot be simultaneously diagonalized, and singular directions no longer decouple.

Using Theorem \ref{lemma:hadamard-conversion}, one could, in principle, apply an SVD-like decomposition to \( B_1 \odot_r B_2 \) and \( A_1^\top \odot_r A_2^\top \). However, the rows of these matrices lie on a Segre variety \( \mathcal{S}_{r_1, r_2} \)\footnote{The Segre variety \( \mathcal{S}_{r_1, r_2} \subset \mathbb{R}^{r_1 r_2} \) is the set of all rank-one tensors expressible as outer products \( u \otimes v \), with \( u \in \mathbb{R}^{r_1} \), \( v \in \mathbb{R}^{r_2} \).}, meaning they reside in a highly constrained non-linear manifold. In general, such constraints prevent the existence of an exact closed-form solution unless every row lies in a rank-one \( \mathbb{R}^{r_1 \times r_2} \) subspace.

As a result, no analogue of truncated SVD exists for this formulation, and optimization must proceed via iterative methods such as gradient descent.

\section{Why Does LoRA-Style Initialization of Both Adapter Pairs Fail?} \label{app:gradients_vanish}
A naive LoRA-style initialization, where \( B_1 \) and \( B_2 \) are initialized to zero while \( A_1 \) and \( A_2 \) follow Kaiming uniform initialization, results in training failure due to gradients becoming identically zero. 
To analyze this, we compute the gradients of the loss with respect to each adapter component and show that they are exactly zero under this initialization, ultimately preventing any learning.

 Notice that, for some input, \(x\), the output, \(y\), is of the form
 \begin{align*}
     y &= Wx + \Delta W x\\
     &= Wx + s_{\text{ABBA}}\left((B_1A_1) \odot(B_2A_2)\right)x\\
     \mathcal{L} &= f(y)
 \end{align*}

To get the final closed form gradients, we compute the gradients element-wise. We can therefore have, for some matrix \(Z \in \{A_1, A_2, B_1, B_2\}\),
\begin{align*}
    \frac{\partial \mathcal{L}}{\partial Z_{pq}} &=  \sum_{i,j}\frac{\partial \mathcal{L}}{\partial \Delta W_{ij}}  \frac{\partial \Delta W_{ij}}{\partial Z_{pq}}\\
    &=  \sum_{i,j} \frac{\partial \mathcal{L}}{\partial \Delta W_{ij}}  \frac{\partial \left(s_{\text{ABBA}}\left((B_1A_1) \odot(B_2A_2)\right)_{ij}\right)}{\partial Z_{pq}}\\
    &= s_{\text{ABBA}}  \sum_{i,j}G_{ij} \cdot \frac{\partial \left((B_1A_1) \odot(B_2A_2)\right)_{ij}}{\partial Z_{pq}}\numberthis \label{eqn: this-one}
\end{align*}

\paragraph{\textbf{For \(Z = A_1\).}} We need to compute
\begin{align*}
    \frac{\partial \mathcal{L}}{\partial A_{1, pq}} &=s_{\text{ABBA}}  \sum_{i,j}G_{ij} \cdot \frac{\partial \left((B_1A_1) \odot(B_2A_2)\right)_{ij}}{\partial A_{1, pq}} \\
    &= s_{\text{ABBA}}  \sum_{i,j}G_{ij} \cdot \frac{\partial \left((B_1A_1)_{ij}(B_2A_2)_{ij}\right)}{\partial A_{1, pq}} \\
    &= s_{\text{ABBA}}  \sum_{i,j}G_{ij}(B_2A_2)_{ij}\frac{\partial \left((B_1A_1)_{ij}\right)}{\partial A_{1, pq}} \\
    &= s_{\text{ABBA}}  \sum_{i,j}G_{ij}(B_2A_2)_{ij} \left( \sum_{l} \frac{\partial\left(B_{1, il}A_{1, lj}\right)}{\partial A_{1, pq}} \right)
\end{align*}
Notice that \(\frac{\partial\left(B_{1, il}A_{1, lj}\right)}{\partial A_{1, pq}} =0\) when \(l \ne p\) and \(j \ne q\) else \(=B_{1, ip}\). This means we can rewrite the above summation as
\begin{align*}
    \frac{\partial \mathcal{L}}{\partial A_{1, pq}} &= s_{\text{ABBA}} \sum_i G_{iq} (B_2A_2)_{iq} B_{1,ip} \\
    &= s_{\text{ABBA}} \sum_i (G \odot (B_2A_2))_{iq} B_{1,ip}
\end{align*}
Notice that the above equation is nothing but a inner product of the \(p^{\text{th}}\) row of \(B_1^\top\) and \(q^{\text{th}}\) column of \((G \odot (B_2A_2))\). We can therefore write the following
\begin{align*}
    \frac{\partial \mathcal{L}}{\partial A_1} &= s_{\text{ABBA}} B_1^\top(G \odot (B_2A_2)) \numberthis \label{eqn: grad-A1}
\end{align*}
\paragraph{\textbf{For \(Z = B_1\).}} Following the same analysis as we did for \(A_1\), we have the following for \(B_1\).
\begin{align*}
    \frac{\partial \mathcal{L}}{\partial B_{1, pq}} &= s_{\text{ABBA}}  \sum_{i,j}G_{ij}(B_2A_2)_{ij} \left( \sum_{l} \frac{\partial\left(B_{1, il}A_{1, lj}\right)}{\partial B_{1, pq}} \right)
\end{align*}
Again notice that \(\frac{\partial\left(B_{1, il}A_{1, lj}\right)}{\partial B_{1, pq}} =0\) when \(i \ne p\) and \(l \ne q\) else \(=A_{1, qj}\). This means we can rewrite the above summation as
\begin{align*}
    \frac{\partial \mathcal{L}}{\partial B_{1, pq}} &= s_{\text{ABBA}} \sum_j G_{pj} (B_2A_2)_{pj} A_{1,qj} \\
    &= s_{\text{ABBA}} \sum_j (G \odot (B_2A_2))_{pj} A_{1,qj}
\end{align*}
Notice that the above equation is nothing but a inner product of the \(p^{\text{th}}\) row of \((G \odot (B_2A_2))\) and \(q^{\text{th}}\) column of \(A_1^\top\). We can therefore write the following
\begin{align*}
    \frac{\partial \mathcal{L}}{\partial B_1} &= s_{\text{ABBA}} (G \odot (B_2A_2)) A_1^\top \numberthis \label{eqn: grad-B1}
\end{align*}

It is also easy to see that our formulation is symmetric in \(A_i\)'s and \(B_i\)'s implying we also have
\paragraph{\textbf{For \(Z = B_2\).}} Following Eqn.~(\ref{eqn: grad-B1})
\begin{align*}
    \frac{\partial \mathcal{L}}{\partial B_2} &= s_{\text{ABBA}} (G \odot (B_1A_1)) A_2^\top \numberthis \label{eqn: grad-B2}
\end{align*}

\paragraph{\textbf{For \(Z = A_2\).}} Following Eqn.~(\ref{eqn: grad-A1}), 
\begin{align*}
    \frac{\partial \mathcal{L}}{\partial A_2} &= s_{\text{ABBA}} B_2^\top(G \odot (B_1A_1)) \numberthis \label{eqn: grad-A2}
\end{align*}
It is clear that initializing both \( B_1 \) and \( B_2 \) to zero causes all gradients to become zero, thereby preventing any learning and leading to complete training failure.

In this section, we provide the proofs for the assertions from the main text.

\section{Proof of Theorem \ref{theorem:rank-stability}: Rank-Stability of ABBA} \label{app:th-rank_stability}

\begin{tcolorbox}[colback=cyan!10, colframe=black]
\begin{theorem*}[Rank-Stability of ABBA]
Let the ABBA update be
$
\Delta W = s_{\text{ABBA}} \,(B_1A_1)\odot(B_2A_2),
$
with $B_1,A_1,B_2,A_2$ independent, mean-zero random matrices with finite variance. 
Then, the update satisfies the stability conditions of Definition~\ref{def:ABBA-stability} 
\emph{if and only if}:
\[
s_{\text{ABBA}} \in \Theta\!\left(\tfrac{1}{\sqrt{r_1 r_2}}\right).
\]
\end{theorem*}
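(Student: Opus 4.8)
The plan is to analyze forward and backward second-moment propagation separately, reduce each to a single scaling computation in $r_1,r_2$, and then read off the necessary and sufficient condition on $s_{\text{ABBA}}$. Throughout I would treat the layer dimensions $m,n$ and the per-entry variances of the factor entries as $\Theta_{r_1,r_2}(1)$ constants, so that all scaling is tracked only in the ranks, matching the intent of Definition~\ref{def:ABBA-stability}. The workhorse computation is the second moment of a single entry of the unscaled update $M := (B_1A_1)\odot(B_2A_2)$: writing $(B_1A_1)_{ij}=\sum_{k=1}^{r_1}B_{1,ik}A_{1,kj}$ and using that $B_1,A_1$ are independent and mean-zero, every off-diagonal ($k\neq k'$) cross term vanishes, leaving $\mathbb{E}[((B_1A_1)_{ij})^2]=\Theta(r_1)$, and symmetrically $\mathbb{E}[((B_2A_2)_{ij})^2]=\Theta(r_2)$. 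Since the two pairs are independent, this gives $\mathbb{E}[M_{ij}^2]=\Theta(r_1r_2)$.

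For the forward condition I would expand $\mathbb{E}[y_i^2]=s_{\text{ABBA}}^2\sum_{j,j'}\mathbb{E}[M_{ij}M_{ij'}]\,\mathbb{E}[x_jx_{j'}]$, using independence of the input from the adapter. The crucial observation is that the off-diagonal covariances vanish: $\mathbb{E}[M_{ij}M_{ij'}]=0$ for $j\neq j'$, because the two columns involve distinct, independent, mean-zero entries of $A_1$ and $A_2$, so the factorized expectation contains a factor $\mathbb{E}[A_{1,kj}A_{1,k'j'}]=0$. Only the diagonal survives, giving $\mathbb{E}[y_i^2]=s_{\text{ABBA}}^2\,\Theta(r_1r_2)$. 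The backward direction is symmetric: with $g=\partial\mathcal{L}/\partial y$ and the standard gradient-independence assumption, $(\partial\mathcal{L}/\partial x)_j=s_{\text{ABBA}}\sum_i M_{ij}g_i$, and the shared-row covariances $\mathbb{E}[M_{ij}M_{i'j}]$ vanish for $i\neq i'$ by the same argument applied to the rows of $B_1,B_2$, yielding $\mathbb{E}[(\partial\mathcal{L}/\partial x)_j^2]=s_{\text{ABBA}}^2\,\Theta(r_1r_2)$.

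Both propagated moments therefore equal $s_{\text{ABBA}}^2\,\Theta(r_1r_2)$, so each is $\Theta_{r_1,r_2}(1)$ exactly when $s_{\text{ABBA}}^2=\Theta(1/(r_1r_2))$, i.e. $s_{\text{ABBA}}\in\Theta(1/\sqrt{r_1r_2})$; any larger scaling makes the moments blow up and any smaller scaling makes them vanish, which delivers both directions of the \emph{if and only if} at once.

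I expect the main obstacle to be the dependence structure of $M$: its entries are correlated across rows and columns through the shared factors, so the argument does not reduce to treating the $M_{ij}$ as independent. The whole proof hinges on verifying that precisely the right covariances ($\mathbb{E}[M_{ij}M_{ij'}]$ with $j\neq j'$ in the forward pass, and $\mathbb{E}[M_{ij}M_{i'j}]$ with $i\neq i'$ in the backward pass) vanish, so that the double sums collapse to clean diagonal sums rather than on any single heavy fourth-moment calculation. A secondary point requiring care is to state the gradient-independence assumption explicitly, as in the rsLoRA analysis, since without it the backward pass cannot be cleanly decoupled into $\mathbb{E}[M_{ij}^2]\,\mathbb{E}[g_i^2]$.
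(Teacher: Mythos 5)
Your proposal is correct and takes essentially the same approach as the paper's proof: expand the forward pass $y_i = s_{\text{ABBA}}\sum_j (B_1A_1)_{ij}(B_2A_2)_{ij}x_j$ and the backward pass symmetrically, use independence and mean-zero factors to get $\mathbb{E}\bigl[(B_1A_1)_{ij}^2(B_2A_2)_{ij}^2\bigr]=\Theta(r_1r_2)$, and conclude that both second moments are $\Theta_{r_1,r_2}(1)$ if and only if $s_{\text{ABBA}}^2\,r_1r_2=\Theta(1)$. The only difference is that you explicitly verify the vanishing of the off-diagonal covariances $\mathbb{E}[M_{ij}M_{ij'}]$ and $\mathbb{E}[M_{ij}M_{i'j}]$, a step the paper's proof passes over implicitly when it collapses the squared sums to diagonal terms.
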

\end{tcolorbox}

\paragraph{Forward \(\text{2}^{\text{nd}}\) moment (ABBA adapter).} 
For any input \(x \in \mathbb{R}^{d_{\text{in}}}\), the ABBA adapter produces
\begin{align}\label{eqn: forward-1}
    y_i = s_{\text{ABBA}} \sum_{j=1}^{d_{\text{in}}} M_{1,ij} M_{2,ij} \cdot x_j, 
    \quad \forall i \in \{1, \dots, d_{\text{out}}\},
\end{align}
where \(M_1 = B_1A_1 \in \mathbb{R}^{d_{\text{out}}\times d_{\text{in}}}\) and \(M_2 = B_2A_2 \in \mathbb{R}^{d_{\text{out}}\times d_{\text{in}}}\).
Thus
\[
   \mathbb{E}[y_i^2] 
   = s_{\text{ABBA}}^2 \cdot \mathbb{E}\!\left[\left( \sum_{j=1}^{d_{\text{in}}} M_{1,ij} M_{2,ij} x_j \right)^2 \right].
\]

\paragraph{Gradient computation.} 
The gradient with respect to the input is
\begin{align*}
    (\nabla_x L)_j 
    &= \sum_{i=1}^{d_{\text{out}}} \frac{\partial L}{\partial y_i} \frac{\partial y_i}{\partial x_j} 
     = s_{\text{ABBA}} \sum_{i=1}^{d_{\text{out}}} M_{1,ij} M_{2,ij} \, (\nabla_y L)_i,
\end{align*}
where \((\nabla_y L)_i = \frac{\partial L}{\partial y_i}\). Hence
\[
    \mathbb{E}[(\nabla_x L)_j^2] 
    = s_{\text{ABBA}}^2 \cdot \mathbb{E}\!\left[\left( \sum_{i=1}^{d_{\text{out}}} M_{1,ij} M_{2,ij} (\nabla_y L)_i \right)^2 \right].
\]

\paragraph{Second-moment factorization.}
Since \(M_1\) and \(M_2\) are independent with mean-zero entries,
\[
\mathbb{E}[M_{1,ij}^2] = r_1 \,\mathbb{E}[B_{1,11}^2]\mathbb{E}[A_{1,11}^2], 
\quad
\mathbb{E}[M_{2,ij}^2] = r_2 \,\mathbb{E}[B_{2,11}^2]\mathbb{E}[A_{2,11}^2].
\]
Therefore
\[
\mathbb{E}[M_{1,ij}^2 M_{2,ij}^2] 
= \mathbb{E}[M_{1,ij}^2] \,\mathbb{E}[M_{2,ij}^2] 
= r_1 r_2 \cdot \mathbb{E}[B_{1,11}^2]\mathbb{E}[A_{1,11}^2]\mathbb{E}[B_{2,11}^2]\mathbb{E}[A_{2,11}^2].
\]

\paragraph{Forward and backward moments.}
Substituting,
\begin{align*}
    \mathbb{E}[y_i^2] 
    &= s_{\text{ABBA}}^2 \cdot d_{\text{in}} \cdot r_1 r_2 \cdot C \cdot \mathbb{E}[x_j^2], \\
    \mathbb{E}[(\nabla_x L)_j^2] 
    &= s_{\text{ABBA}}^2 \cdot d_{\text{out}} \cdot r_1 r_2 \cdot C \cdot \mathbb{E}[(\nabla_y L)_i^2],
\end{align*}
where \(C=\mathbb{E}[B_{1,11}^2]\mathbb{E}[A_{1,11}^2]\mathbb{E}[B_{2,11}^2]\mathbb{E}[A_{2,11}^2]\).

\paragraph{Rank-stability.}
By Definition~\ref{def:ABBA-stability}, forward stability requires \(\mathbb{E}[y_i^2] = \Theta_{r_1,r_2}(1)\) whenever \(\mathbb{E}[x_j^2]=\Theta(1)\), and backward stability requires \(\mathbb{E}[(\nabla_x L)_j^2]=\Theta_{r_1,r_2}(1)\) whenever \(\mathbb{E}[(\nabla_y L)_i^2]=\Theta(1)\). Both hold exactly when
\[
s_{\text{ABBA}}^2 r_1 r_2 = \Theta(1),
\]
that is,
\[
s_{\text{ABBA}} \in \Theta\!\left(\tfrac{1}{\sqrt{r_1 r_2}}\right).
\]
 
Following Definition~\ref{def:ABBA-stability}, these results imply that ABBA adapters are rank-stabilized.

\section{Selecting $\alpha$} \label{app:vary-eta}

We perform a sweep over different \( \alpha\) values, reporting results for Mistral-7B in Table \ref{tab:eta-vary-math}. 
In line with prior results in Table~\ref{tab:eta-vary-cr}, ABBA performs best when \( \alpha_{\text{LoRA}} \) lies in the typical range of 16–32.

\begin{table}[!h]
 \centering
 \caption{
    Performance comparison across different $\alpha$ values for Mistral-7B.}
 \setlength{\tabcolsep}{8pt}
 \small
 \begin{tabular}{c|cc}
   \toprule
   \multirow{2}{*}{$\alpha$} & \multicolumn{2}{c}{\textbf{Accuracy ($\uparrow$)}} \\
   \cmidrule{2-3}
   & \textbf{GSM8K} & \textbf{MATH} \\
   \midrule
   $4$& $62.17$ & $17.10$ \\
   $8$ & $64.06$ & $17.60$ \\
   $12$ & $64.43$ & $18.14$ \\
   \rowcolor{cyan!10} $16$ & $66.10$ & $18.16$ \\
   \rowcolor{cyan!10} $24$ & $66.26$ & $18.08$ \\
   \rowcolor{cyan!10} $32$ & $65.81$ & $17.68$ \\
   $48$ & $65.20$ & $16.82$ \\
   $64$ & $64.79$ & $15.08$ \\
    \bottomrule
 \end{tabular}
 \label{tab:eta-vary-math}
\end{table}

\section{Effect of Varying Rank} \label{app:vary-rank}
We evaluate ABBA on Mistral-7B while varying the total rank budget \( r \), with $r_1=r_2=r/2$ and results shown in Table \ref{tab:rank-vary}. 
Performance improves substantially from \( r = 16 \) to \( r = 32 \), with the best results achieved at \( r = 32 \), after which gains begin to saturate. 
This aligns with our hypothesis: at \( r = 32 \), ABBA’s effective rank reaches \( r_1 \times r_2 = 16 \times 16 = 256 \), which is significantly higher than the effective rank of 64 at \( r = 16 \). 
The increased expressivity enables ABBA to better capture task-specific patterns. 
However, increasing the rank beyond $32$, or an effective rank beyond $256$, yields diminishing returns and may lead to overfitting, mirroring trends observed in using very high-ranked LoRA as well.

\begin{table}[!h]
  \centering
\caption{Performance comparison of ABBA on Mistral-7B across varying total rank values \( r \).}
  \setlength{\tabcolsep}{8pt}
  \small
  \begin{tabular}{c|cc}
    \toprule
    \multirow{2}{*}{\textbf{Rank}} & \multicolumn{2}{c}{\textbf{Accuracy ($\uparrow$)}} \\
    \cmidrule{2-3}
    & \textbf{GSM8K} & \textbf{MATH} \\
    \midrule
    $16$  & $64.97$           & $18.06$         \\
    $32$  & $\mathbf{66.26}$  & $\mathbf{18.08}$\\
    $64$  & $65.05$           & $17.98$         \\
    $128$ & $65.73$           & $17.96$         \\
    \bottomrule
  \end{tabular}
  \label{tab:rank-vary}
\end{table}

\section{Training Time} \label{app:training-time}

Following the discussion in Section \ref{subsec:time}, we report training times across all models and tasks in Table~\ref{tab:training-time}. 
ABBA incurs only a negligible overhead of approximately $2-3 \%$ compared to LoRA, primarily due to the extra computation resulting from the Hadamard product.

\begin{table}[!h]
 \centering
\caption{Training time comparison between ABBA and LoRA across multiple model and task settings.}
 \setlength{\tabcolsep}{8pt}
 \small
 \begin{tabular}{ccc}
   \toprule
   \multirow{2}{*}{\textbf{Model}} & \multicolumn{2}{c}{\textbf{Training Time}} \\
   \cmidrule{2-3}
   & \textbf{LoRA} & \textbf{ABBA} \\
   \midrule
   Llama-3.2 1B (Commonsense) & 2:42:17 & 2:46:18 \\
   Llama-3.2 3B (Commonsense) & 6:05:43 & 6:11:26 \\
   Mistral-7B (Arithmetic) & 1:15:55 & 1:18:22 \\
   Gemma-2 9B (Arithmetic)  & 1:45:33 & 1:49:45 \\
    \bottomrule
 \end{tabular}
 \label{tab:training-time}
\end{table}

\section{Performance on Coding Tasks} \label{app:coding}

We also evaluate ABBA against other PEFT variants on code generation tasks.
We fine-tuned Llama-3.2-1B on a 100K subset of the CodeFeedback dataset \cite{codefeedback} and assessed performance using Pass@1 on HumanEval \cite{humaneval}. 
As shown in Table \ref{tab:coding}, ABBA again performs very strongly, outperforming all other PEFT variants and even full fine-tuning (all PEFT methods are run at rank 32). 
These results demonstrate that the advantages of our method generalize to more complex tasks such as code generation.

\begin{table}[!h]
\centering
\caption{
    Comparison of multiple FT methods across the coding benchmark - HumanEval. Best results among PEFT methods are in \textbf{bold}.}
\setlength{\tabcolsep}{4pt}
\small
\begin{tabular}{lc|c}
  \toprule
  \textbf{Method} & \textbf{\# Params} & \textbf{HumanEval ($\uparrow$)} \\
  \midrule
  Full FT     & $1.24$ B   & $23.17$ \\
  LoRA        & $22.54$ M  & $20.73$ \\
  HiRA        & $22.54$ M  & $21.95$ \\
  \rowcolor{cyan!10} ABBA & $22.54$ M & $\mathbf{25.61}$ \\
  \bottomrule
\end{tabular}
\label{tab:coding}
\end{table}


\section{Rank-Stability of ABBA: Gradient Norms} \label{app:rank-stab-grad-norms}

In Theorem \ref{theorem:rank-stability}, we had established the optimal scaling for ABBA,
\[
s_{\text{ABBA}} \in \Theta\!\left(\tfrac{1}{\sqrt{r_1 r_2}}\right).
\]
We now verify this result empirically by comparing several scaling strategies:
\[
s_{\text{ABBA}} \in \Theta(1),\quad
\Theta\!\left(\tfrac{1}{(r_1 r_2)^{1/6}}\right),\quad
\Theta\!\left(\tfrac{1}{(r_1 r_2)^{1/4}}\right),\quad
\Theta\!\left(\tfrac{1}{(r_1 r_2)^{1/2}}\right).
\]

As shown in Figure \ref{fig:grad_norms_r1r2}, the scaling
$
s_{\text{ABBA}} \in \Theta\!\left(\tfrac{1}{\sqrt{r_1 r_2}}\right)
$
produces the most stable gradient norms, consistent with our theoretical findings.

\begin{figure}
    \centering
    \includegraphics[width=0.6\linewidth]{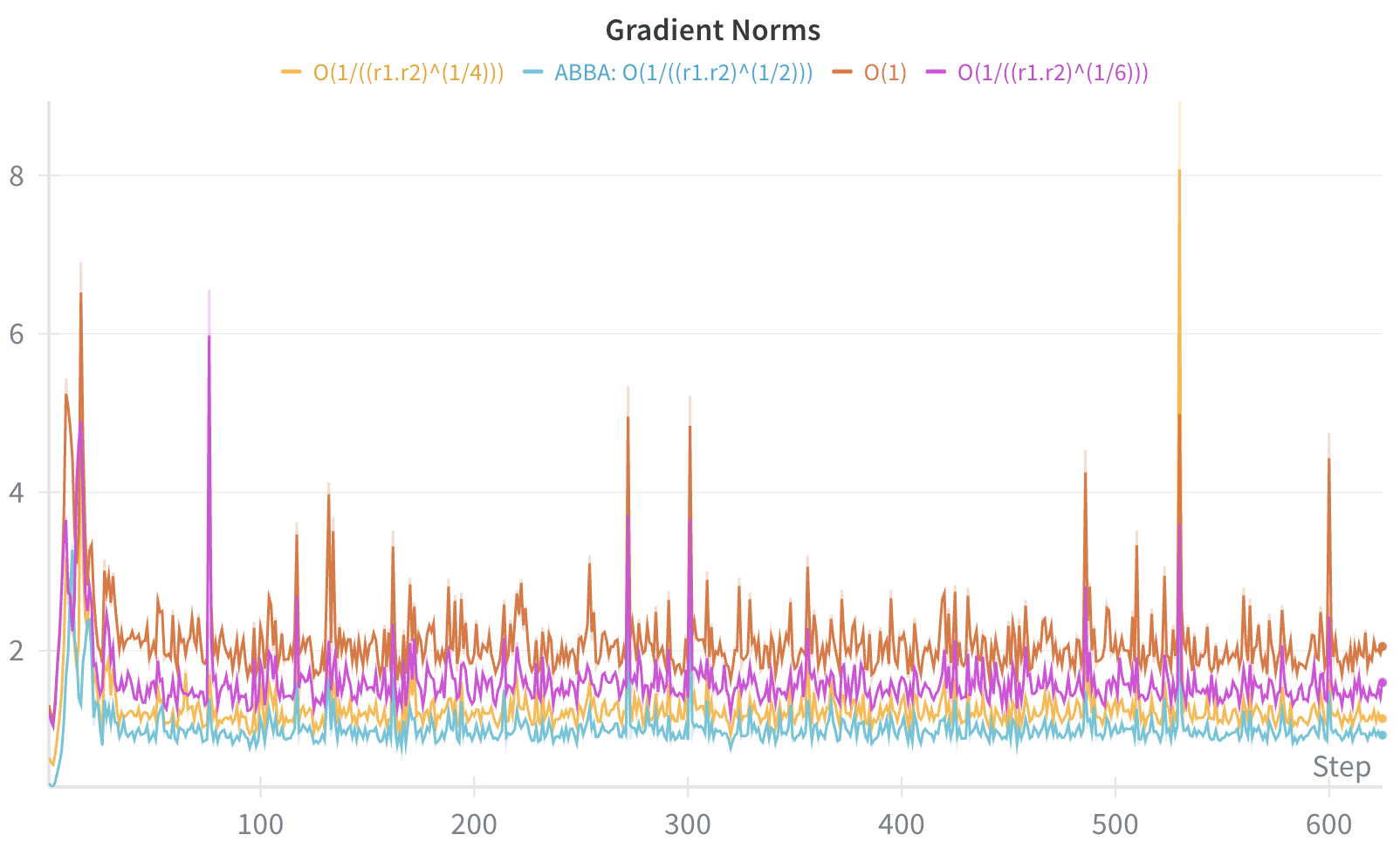}
    \caption{Gradient norms in Mistral-7B comparing various scaling strategies.}
    \label{fig:grad_norms_r1r2}
\end{figure}

\section{Stability in Chained ABBA: Gradient Norms} \label{app:chains}

We present gradient norms for the chained ABBA setup (with four adapter pairs) compared to the conventional ABBA configuration (with two pairs) in Figure \ref{fig:grad_norms_chain}. 
As hypothesized in Section \ref{analysis}, the chained variant exhibits significantly greater gradient-norm instability than the standard ABBA model.

\begin{figure}
    \centering
    \includegraphics[width=0.6\linewidth]{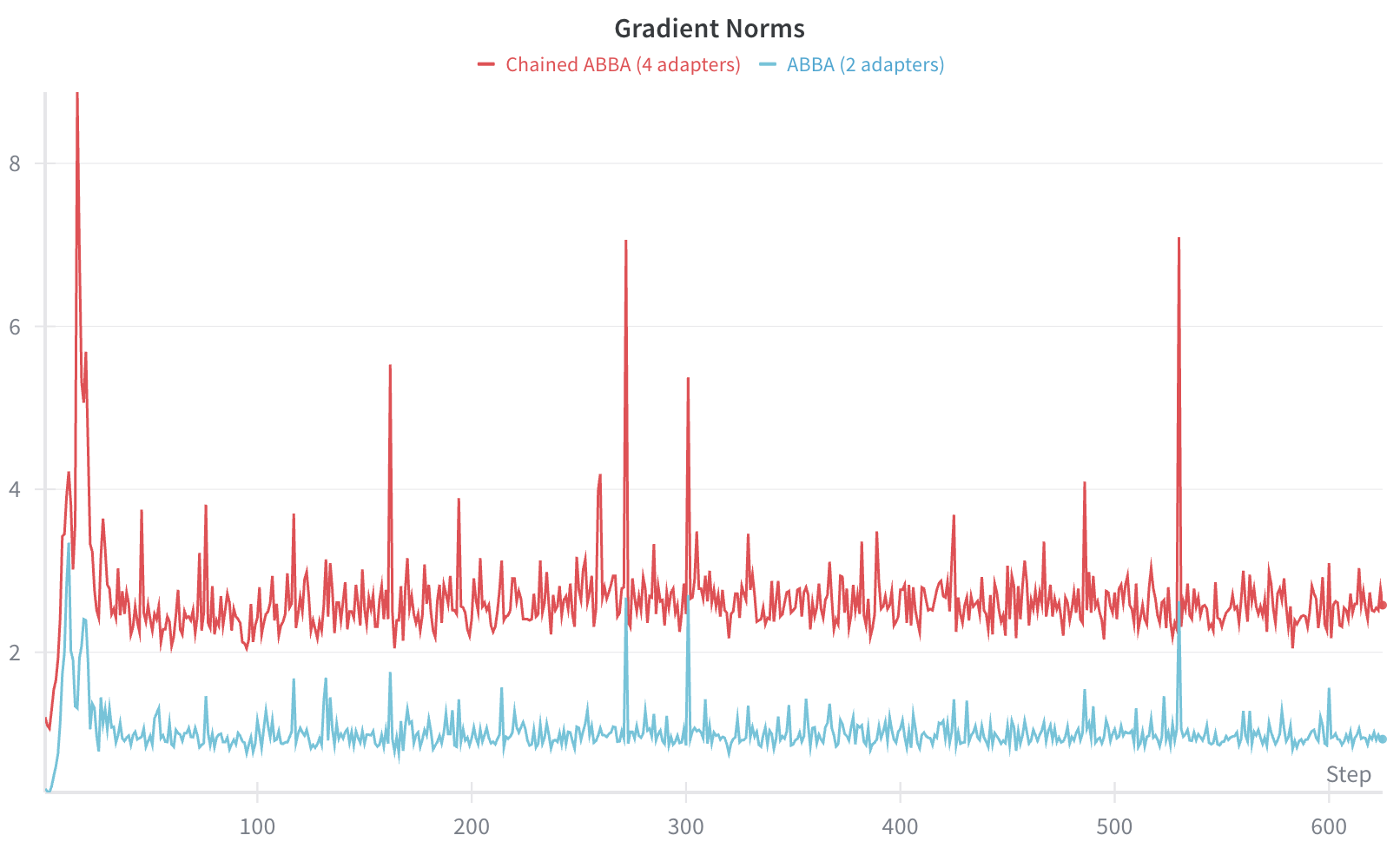}
    \caption{Gradient norms in Mistral-7B comparing chained ABBA (4 pairs) and normal ABBA}
    \label{fig:grad_norms_chain}
\end{figure}

\section{Comparison with High-Rank Baselines} \label{app:high-rank}

We evaluate ABBA against additional higher-rank PEFT variants, namely Flora \cite{hao2024flora}, MoRA \cite{jiang2024morahighrankupdatingparameterefficient}, and KronA \cite{edalati2022krona}, on code generation tasks.
We fine-tuned Llama-3.1 8B on a 100K subset of the CodeFeedback dataset \cite{codefeedback} and assessed performance using Pass@1 on HumanEval \cite{humaneval}. 
As shown in Table \ref{tab:high-rank}, ABBA again performs very strongly, outperforming all other higher-rank PEFT variants. 

\begin{table}[!h]
\centering
\caption{
    Comparison of multiple PEFT methods across the coding benchmark - HumanEval. Best results among PEFT methods are in \textbf{bold}.}
\setlength{\tabcolsep}{4pt}
\small
\begin{tabular}{l|c}
  \toprule
  \textbf{Method} & \textbf{HumanEval ($\uparrow$)} \\
  \midrule
  Full FT       & $48.54$ \\
  LoRA       & $45.73$ \\
  KronA   & $45.12$ \\
  Flora        & $45.56$ \\
  MoRA    & $47.28$ \\
  HiRA     & $48.17$ \\
  \rowcolor{cyan!10} ABBA & $\mathbf{49.39}$ \\
  \bottomrule
\end{tabular}
\label{tab:high-rank}
\end{table}

\section{Experimental Details} \label{app:exp}

We implement all models using PyTorch \citep{paszke2019pytorch} and HuggingFace Transformers \citep{wolf2020transformers}. All experiments run on a single NVIDIA A6000 GPU (48 GB). To reduce memory usage, we initialize base models in \texttt{torch.bfloat16} precision. We train each configuration with the AdamW optimizer \citep{loshchilov2017decoupled} and report the mean performance over three random seeds.

We configure Llama-3.2 1B, Llama-3.2 3B, Mistral-7B, and Gemma-2 9B using the hyperparameters shown in Table \ref{tab:hyper_it}. 
We conduct a sweep over learning rates and scaling factors to identify optimal settings for each model-task pair.
ABBA generally performs better with slightly higher learning rates compared to LoRA, and we recommend initiating hyperparameter sweeps in that range.

While we adopt most settings from prior work \citep{cr-dataset}, we perform a targeted learning rate sweep to optimize performance. For baseline comparisons, we replicate the experimental setups from the original PiSSA \citep{pissa}, rsLoRA \citep{rslora}, DoRA \citep{Liu_Wang_Yin_Molchanov_Wang_Cheng_Chen_2024}, LoRA-Pro \citep{LoRA-Pro}, and HiRA \citep{huang2025hira} papers to ensure fair and consistent evaluation.

\begin{table}[ht]
\centering
\caption{
Hyperparameter settings for training Llama-3.2 1B and 3B on \textsc{Commonsense170K}, and Mistral-7B and Gemma-2 9B on MetaMathQA.}
\begin{tabular}{l|cc}
\hline
\toprule
 & \textbf{Llama-3.2 1B / 3B} & \textbf{Mistral-7B / Gemma-2 9B} \\
\midrule

Optimizer & \text{AdamW} & \text{AdamW} \\
Batch size & $6$ & $1$ \\
Max. Seq. Len & $256$ & $512$ \\
Grad Acc. Steps & $24$ & $32$ \\
Epochs & $2$ & $1$ \\
Dropout & $0.05$ & $0$ \\
Learning Rate & $1\times10^{-3}$ & $1\times10^{-3}$ \\
LR Scheduler & Linear & Cosine \\
Warmup Ratio & $0.02$ & $0.02$ \\
\bottomrule
\end{tabular}
\label{tab:hyper_it}
\end{table}

\section{Dataset Details} \label{app:dataset}

\textbf{\textsc{CommonSense170K}} is a unified benchmark that aggregates eight commonsense reasoning datasets into a single multi-task setting \citep{cr-dataset}. Each instance is a multiple-choice question, and models are prompted to select the correct answer without providing explanations. We adopt the prompt format introduced by the paper \cite{cr-dataset}. Below, we briefly describe the constituent datasets:

\begin{itemize}
\item \textbf{OBQA} \citep{mihaylov2018can}: Open-book QA requiring retrieval and multi-hop reasoning over external knowledge.
\item \textbf{ARC Challenge (ARC-c)} \citep{clark2018think}: Difficult grade-school science questions designed to test advanced reasoning beyond surface heuristics.
\item \textbf{ARC Easy (ARC-e)} \citep{clark2018think}: Simpler science questions assessing core factual and conceptual understanding.
\item \textbf{WinoGrande} \citep{sakaguchi2021winogrande}: Pronoun resolution tasks requiring commonsense inference to resolve ambiguity.
\item \textbf{HellaSwag} \citep{zellers2019hellaswag}: Next-sentence prediction under a constrained completion setting, testing grounded understanding of everyday scenarios.
\item \textbf{PIQA} \citep{bisk2020piqa}: Physical reasoning tasks where models select the most sensible solution to a practical problem.
\item \textbf{SIQA} \citep{sap2019socialiqa}: Social reasoning benchmark involving questions about intent, social dynamics, and consequences of human actions.
\item \textbf{BoolQ} \citep{clark2019boolq}: Binary (yes/no) questions drawn from natural queries, requiring contextual understanding of short passages.
\end{itemize}

\textbf{MetaMathQA} \citep{metamathqa} reformulates existing mathematical problems into alternative phrasings that preserve their original semantics, offering diverse surface forms without introducing new information. We evaluate models fine-tuned on this dataset using two benchmarks: \textbf{GSM8K} \citep{gsm8k}, which targets step-by-step reasoning in elementary arithmetic word problems, and \textbf{MATH} \citep{math}, which features high-difficulty problems drawn from math competitions. We evaluate solely based on the correctness of the final numeric answer.



\section{Use of Large Language Models} \label{app:llm_use}

LLMs are used exclusively for minor writing assistance, for example, to enhance grammar and improve sentence clarity.

\end{document}